\documentclass[a4paper]{article}

\usepackage[margin=1in]{geometry}  
\usepackage[english]{babel}
\usepackage[utf8x]{inputenc}
\usepackage[T1]{fontenc}
\usepackage{parskip}
\setlength{\parskip}{0.3em}
\setlength{\parindent}{15pt} 
\usepackage[dvipsnames]{xcolor}
\usepackage{graphicx}
\usepackage{subcaption}
\usepackage{longtable} 
\usepackage{multirow}
\usepackage{listings}
\usepackage{makecell}
\usepackage{array}
\usepackage{float}
\usepackage{dsfont}
\usepackage{rotating}
\usepackage{booktabs}
\usepackage{enumitem}
\usepackage{tikz}
\usepackage{pgf}
\usepackage{wrapfig}
\usepackage{tcolorbox}
\usepackage{soul}

\usepackage{amsmath}
\usepackage{amssymb}
\usepackage{amsthm}
\usepackage{bm}
\usepackage{mathtools}
\usepackage{mathrsfs}

\usepackage[
  colorlinks,
  citecolor=blue,
  linkcolor=red,
  anchorcolor=red,
  urlcolor=blue
]{hyperref}
\mathtoolsset{showonlyrefs}
\usepackage{authblk}
\usepackage{todonotes}
\usepackage{ying}
\renewcommand{\hat}{\widehat}

\def\##1\#{\begin{align}#1\end{align}}
\def\$#1\${\begin{align*}#1\end{align*}}
\newcommand{\fdr}{\textnormal{FDR}}

\newcommand{\calib}{\textnormal{cal}}
\newcommand{\train}{\textnormal{tr}}
\newcommand{\test}{\textnormal{test}}
\newcommand{\ctext}[3][RGB]{%
  \begingroup
  \definecolor{hlcolor}{#1}{#2}\sethlcolor{hlcolor}%
  \hl{#3}%
  \endgroup
}

\makeatletter
\newcommand{\printfnsymbol}[1]{%
  \textsuperscript{\@fnsymbol{#1}}%
}
\makeatother
\long\def\comment#1{}

\title{Conformal Alignment: Knowing When to Trust \\ Foundation Models with Guarantees}
\author[1]{Yu Gui\thanks{Alphabetical ordering.}}
\affil[1]{Department of Statistics, University of Chicago}
\author[2]{Ying Jin\printfnsymbol{1}}
\affil[2]{Data Science Initiative, Harvard University}
\author[3]{Zhimei Ren\printfnsymbol{1}}
\affil[3]{Department of Statistics and Data Science, 
University of Pennsylvania}
\date{\today}

\begin{document}
\maketitle
\begin{abstract}
Before deploying outputs from foundation models in high-stakes tasks,
it is imperative to ensure that they align with human values.
For instance, in radiology report generation, reports generated by a
vision-language model must align with human evaluations before their use in medical decision-making. 
This paper presents Conformal Alignment,\footnote{The code is available at \url{https://github.com/yugjerry/conformal-alignment}.} a general framework for identifying units whose outputs meet a user-specified alignment criterion. It is guaranteed that on average, a prescribed fraction of selected units indeed meet the alignment criterion, regardless of the foundation model or the data distribution. 
Given any pre-trained model and new units with model-generated outputs, 
Conformal Alignment leverages a set of reference data with ground-truth alignment status to
train an alignment predictor. It then selects new units whose predicted 
alignment scores surpass a data-dependent threshold, certifying their corresponding 
outputs as trustworthy. Through applications to question answering and 
radiology report generation, we demonstrate that our method is able to accurately identify units with trustworthy outputs 
via lightweight training over a moderate amount of reference data. En route, we investigate the informativeness of various features in alignment prediction and combine them with standard models to construct the alignment predictor. 
\end{abstract}

\section{Introduction}
\label{sec:intro}

Large-scale, pre-trained foundation models are remarkably powerful in generating relevant 
and informative outputs of various forms for diverse downstream tasks, marking a new era of artificial intelligence~\cite{achiam2023gpt,zhang2023opt,touvron2023llama}.   
However, 
it has been recognized that they can be prone to factual errors~\cite{shuster2022language}, 
hallucinations~\cite{huang2023survey,weidinger2021ethical},  and bias~\cite{barocas2023fairness}, among others.
These issues raise prevalent societal 
concerns regarding the reliable use of foundation models in high-stakes scenarios~\cite{challen2019artificial,bostrom2018ethics}. 
It remains a significant challenge to ensure that outputs from these highly complex models align with human 
values~\cite{quach2023conformal}. 
%


Towards uncertainty quantification of black-box models, 
conformal prediction~\cite[CP]{vovk2005algorithmic}
offers a versatile, distribution-free solution.
While   CP  has been applied to
classification and 
regression tasks addressed by foundation models~\cite{kumar2023conformal,li2023trac,su2024api,ulmer2024non,ye2024benchmarking,angelopoulos2024conformal},  
its use to ensure the alignment of general-form outputs remains largely unexplored. 
Recently,  
\cite{quach2023conformal} propose to generate multiple outputs for one input until it is guaranteed that {\em at least} one output meets a specific 
alignment criterion. 
Such a guarantee, however, may not be immediately practical, as users must still determine (manually) which output(s) is aligned. \cite{mohri2024language} modify the
machine-generated outputs to ensure factuality with high probability, which in turn may sacrifice model power by making responses vague.\footnote{See Appendix~\ref{app:subsec_compare} for more detailed discussion on the types of guarantees.}
As such, determining guarantees that are practically meaningful for downstream tasks 
and developing methods to achieve them remain an active research area. 

This paper introduces a distinct type of guarantee: we 
aim to {\em certify} whether 
each model output is aligned or not, such that  
\emph{those certified} are ensured to be \emph{mostly correct}. 
This guarantee is directly relevant to downstream tasks, as it allows immediate and reliable use of the certified outputs without further modifications. 
On the other hand, we \emph{abstain} from deploying model outputs for units (data points) that are not selected---intuitively, those are where the model lacks confidence (i.e., ``knowing they do not know''), and they can be evaluated by human experts. 
Such a practice therefore provides an {\em automated} pipeline for the safe deployment of 
model-generated outputs.

\vspace{-0.5em}
\paragraph{Present work.}
We present Conformal Alignment (CA), a general framework 
that 
determines \emph{when} to trust foundation model outputs with finite-sample, 
distribution-free guarantees.   
The framework is based on CP ideas, but
unlike standard CP that searches the sampling space to construct a prediction set that \emph{contains} a desired output, 
we leverage the quantified confidence as an instrument to \emph{select} units whose already-generated outputs are trusted.  
Given any model and any alignment criterion, 
Conformal Alignment ensures that a prescribed proportion of the selected units' model outputs indeed meet the alignment criterion.\footnote{In this work, we use ``alignment'' to refer to a desired 
property of an output that may vary with the context.}
Concretely, suppose $m\in\NN_+$ new units awaiting outputs. 
Given an error level $\alpha\in (0,1)$, Conformal Alignment 
controls the false discovery rate (FDR) 
in selecting units with trustworthy outputs:
\#\label{eq:fdr_control}
\fdr = \EE\bigg[\frac{\sum_{i=1}^m \ind\{i\text{ selected and not aligned}\}}
{\sum_{i=1}^m \ind\{i\text{ selected}\}}\bigg] \leq \alpha,
\#
where the expectation is taken over the randomness of the data. 
FDR control offers an interpretable measure of the quality of selected
deployable units. 
Our method builds upon the \emph{Conformalized Selection} framework~\cite{jin2023selection,jin2023model}, 
leveraging a holdout set of ``high-quality'' reference 
data to guide the selection with calibrated statistical confidence for trusted outputs. 
Like in CP,~\eqref{eq:fdr_control} holds in finite-sample as long as the holdout data are exchangeable with the new units. 

Our method inherits CP's (1) \textbf{rigor}, providing distribution-free, finite-sample error control, and (2) \textbf{versatility}, applicable to any model and any alignment criterion. 
In addition, 
by selecting rather than modifying outputs, our approach preserves the (3) \textbf{informativeness} of the original outputs, and remains (4) \textbf{lightweight}, e.g., avoiding the need to retrain large models.

\vspace{-0.5em}
\paragraph{Demonstration of workflow.}
We illustrate the pipeline of Conformal Alignment via a use case in radiology report generation (detailed in Section~\ref{sec:xray}) in Figure~\ref{fig:workflow}: 
each prompt corresponds to an X-ray scan, for which a vision-language model generates a report.
Our framework identifies a subset of generated reports 
that are reliable in the sense that at least, say $99\%$, of 
the selected reports are {\em guaranteed} to be aligned if they were to be compared with a 
human expert report. 
 
The workflow begins with (test) prompts $\cD_\test$, corresponding to X-ray scans $X\in \cX$. A foundation model $f\colon \cX\mapsto \cY$ generates a report $f(X)\in \cY$ for each scan. However, only high-quality reports should be handed over to doctors for clinical decisions. 
To achieve this, we leverage a set of  X-ray scans that are independent of the model's training process, each with human expert reports available. 
These scans are randomly split into a training set $\cD_\train$ and a calibration set $\cD_\calib$. 
Conformal Alignment  trains a predictor on $\cD_\train$
for predicting the alignment score that assess how likely a generated output aligns. 
Finally, new reports are selected if their predicted alignment scores exceed  a  \emph{data-driven} threshold, which is delicately set with $\cD_\calib$
such that the FDR is strictly controlled at the desired level.



\begin{figure}
    \centering \includegraphics[width=1\linewidth]{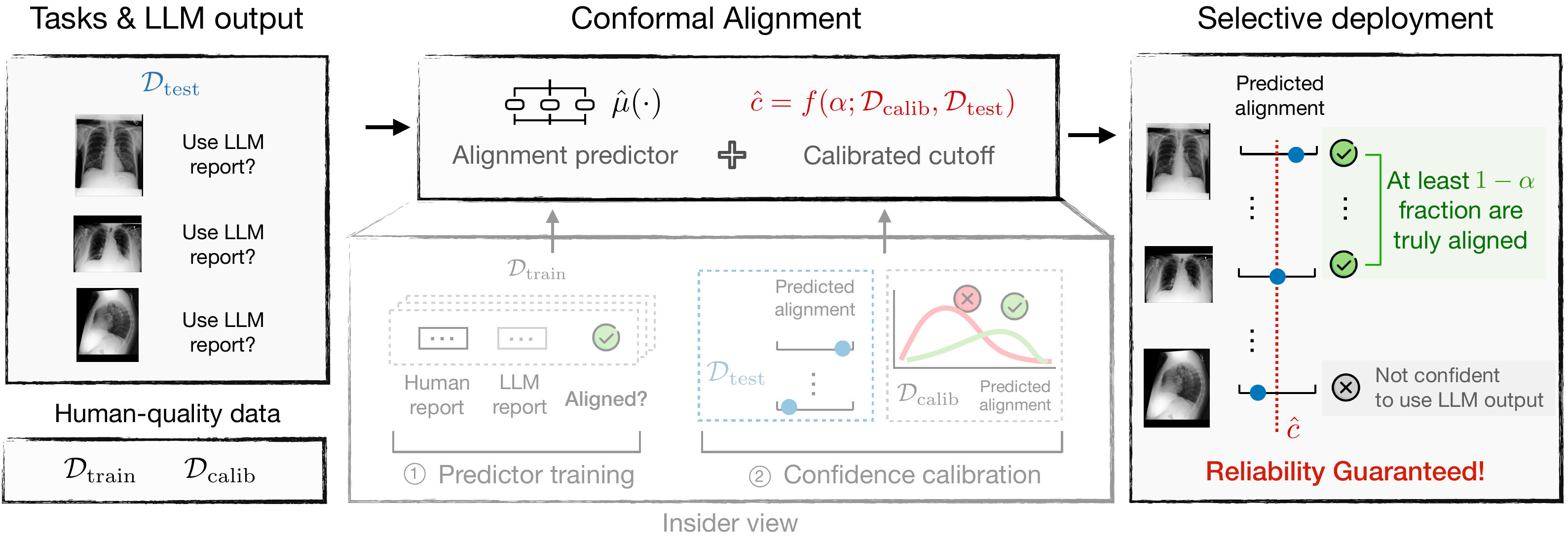}
    \caption{Pipeline of Conformal Alignment instantiated in the radiology report generation example.}
    \label{fig:workflow}
\end{figure}
 \vspace{-0.5em}
\paragraph{Practical relevance.}
We apply Conformal Alignment to two use cases---question answering and radiology report generation---to demonstrate its efficiency. With quite lightweight training of the alignment predictor (for example, training a tree-based model would suffice), we are able to accurately  identify trustworthy outputs 
and select as many of them as possible. 
En route, we exhaustively investigate practical aspects such as the efficacy of various predictors in these applications and the amount of ``high-quality'' data needed for accurate selection to guide practical deployment. 
 

\section{Problem setup}
Suppose we have a pre-trained foundation model $f:\cX \mapsto \cY$ 
that maps a prompt to an output.  Throughout, 
we view the model as given and fixed.  
Assume access to a set of
holdout units $\cD = (X_i,E_i)_{i=1}^n$, where $X_i \in \cX$ is the input 
prompt and $E_i \in \cE$ is any  reference that may be used in judging alignment. 
An {\em alignment function} $\cA: \cY \times \cE \mapsto \RR$
takes as input the generated output $f(X )$ and the reference $E $, and outputs
an (true) alignment score $A = \cA(f(X ),E )$~\cite{quach2023conformal}. 
In the radiology report generation example, $A $ might be the similarity score  
between the machine-generated report $f(X )$ and a human expert report $E $.
We shall detail the choice of $\cA$ in each use case. 
Note that for a unit without reference information, the alignment score of its generated output is unknown, and we do not seek to evaluate it, which often requires expert annotation or human judgment. 

For a test set $\cD_\test = \{X_{n+j}\}_{j=1}^m$,
we aim to select a subset $\cS \subseteq [m]:=\{1,\dots,m\}$ 
such that most of their (unobserved) alignment scores exceed 
a pre-fixed threshold $c \in \RR$. 
The error is measured by
\$ 
\fdr = \EE\Bigg[\frac{\sum_{j\in[m]} \ind\{A_{n+j} \le c, j \in \cS\}}{\max(|\cS|,1)}\Bigg],
\$
where $|\cdot|$ denotes the cardinality of a set. 
In particular, we aim to enforce that $\fdr\leq \alpha$ for a pre-specified level $\alpha\in (0,1)$. 
The FDR measures the averaged proportion of {\em selected units} that are not aligned, 
thereby directly quantifying the ``cost'' of deploying outputs in $\cS$. 
Such a measure (under different terminologies) appears to be considered 
empirically in selective prediction, although a rigorous 
control has been lacking therein
(see e.g.,~\cite{whitehead2022reliable,varshney2022investigating,rabanser2022selective}).
Besides FDR control, it is also desirable that 
as many units can be safely deployed, which translates 
into maximizing the power:
\#\label{eq:power}
\text{Power} = \EE\Bigg[\frac{\sum_{j\in[m]} \ind\{A_{n+j} > c, j\in \cS\}}
{\max(\sum_{j\in [m]} \ind\{A_{n+j} > c\},1)}\Bigg].
\#

It should be emphasized that our framework prioritizes FDR control over power,
in that we strictly enforce the former while optimizing the latter
under the constraints. 
This is related to but slightly differs from the {\em selection with controlled risk} setting in selective prediction~\cite{el2010foundations,geifman2017selective};
see Section~\ref{sec:sp_comparison} for more details.
Such a setup is motivated by scenarios 
such as medical decision-making and knowledge access,
where the cost of committing a 
type-I error (i.e., selecting a non-aligned unit) can be grave, and should be of primary concern. 
In addition, we define power as the proportion among truly aligned units, instead of the number of selections---indeed, if a foundation model is not powerful in 
generating aligned outputs, to begin with, it is natural that only a few of the outputs shall be deployed. 

\subsection{Related works} 
\label{subsec:related}

\paragraph{Uncertainty quantification and conformal prediction for foundation models.}
Conformal prediction~\cite{vovk2005algorithmic,lei2018distribution,romano2019conformalized}
is a distribution-free framework for uncertainty quantification 
of generic prediction algorithms. 
This work adds to the growing literature that 
applies/extends CP to
foundation models, which varies in 
definitions of uncertainty and alignment, 
types of guarantees, rules for decision-making, etc.
Below, we summarize representative ideas and contrast them with the present work.

A  line of work applies CP to
construct prediction sets that cover the outcome for a {\em single} 
test unit in classification or regression problems addressed by language models~\cite{kumar2023conformal,li2023trac,su2024api,ulmer2024non,ye2024benchmarking}. 
While desired, their reliability \emph{in downstream tasks} is only empirically evaluated: we note that coverage of prediction sets {\em does not} carry over to  the selected units~\cite{jin2023selection,jin2024confidence}. 
In contrast, this work applies to outputs of general forms, considers multiple units simultaneously, and offers guarantees relevant to downstream tasks. In particular, our methodology is built upon the series of papers in {\em conformalized selection}~\cite{jin2023selection,jin2023model}. While they aim to find large values of outcomes,
we motivate and tailor the framework for the alignment of foundation models 
(a more detailed discussion is in Appendix~\ref{sec:cs_comparison}). 

Two works~\cite{quach2023conformal,mohri2024language} addressing the alignment of foundation models with CP ideas have been briefly discussed in Section~\ref{sec:intro}.
Our setup draws ideas from
\cite{quach2023conformal}, but the guarantees we provide differ from both. In particular, our method may be better suited to situations where it is desirable to avoid modifying the outputs or enlarging candidate sets which requires retraining the models.
%
%

More generally,~\cite{kuhn2023semantic,lin2023generating,huang2024uncertainty} 
investigate the metric of uncertainty for natural language outputs. This is orthogonal to the present work, as our method is compatible with any alignment criterion.
Nevertheless, our work adds to the discussion on desirable uncertainty quantification guarantees.

 \vspace{-0.5em}
\paragraph{Selective prediction.}
Our framework is closely related to the idea of selective prediction, 
where one is allowed either to {\em predict} or to {\em abstain} from making a 
prediction~\cite{chow1957optimum,el2010foundations,geifman2017selective,mozannar2020consistent,angelopoulos2021learn}.
In the context of large language models,~\cite{kamath2020selective,yang2023uncertainty,ren2023robots,yoshikawa2023selective}
have similarly considered the goal of teaching the model not to predict  
when the model is uncertain about its output, but theoretical guarantees 
on the alignment of the selected outputs are lacking.
In contrast, this work rigorously formalizes the  guarantee and provides an end-to-end
framework that achieves it under a mild exchangeability condition. 
As mentioned earlier, the strict control of FDR 
is closely connected to the 
``selection with controlled risk'' setting in 
selective prediction~\cite{el2010foundations,geifman2017selective}, 
which aims to control a slightly different 
error measure; we provide a detailed comparison of the error measures and methods in Appendix~\ref{sec:sp_comparison}.

\vspace{-0.5em}
\paragraph{Alignment of foundation models.} 
There is a rapidly growing literature on the alignment of foundation models, wherein topics include alignment evaluation~\cite{liu2023trustworthy,ryan2024unintended}, 
prompt design~\cite{kadavath2022language}, training with 
human feedback~\cite{bai2022training,ouyang2022training,rafailov2023direct}, 
among others~\cite{chen2023adaptation}.  
We rely on commonly-used alignment criteria in our applications~\cite{smit2004chexbert,lin2023generating,kuhn2023semantic}. 
However, we achieve strict error control via post-hoc adjustment/selection, instead of deriving training  strategies to improve certain alignment metrics of the model.

\section{Conformal Alignment}
Given a set of data $\cD$ with reference information, 
our procedure
starts by splitting $\cD$ into two disjoint sets: the training set $\cD_{\tr}$ 
and the calibration set $\cD_{\calib}$.  
With a slight abuse of notation, we shall also use $\cD_{\tr}$ and $\cD_{\calib}$
to denote the indices of the units in the sets when the context is clear. 
We then fit a model $g: \cX \to \RR$ on $\cD_{\tr}$ to 
predict the alignment score based on $X$ 
(which may also involve information from $f$). 
Next, we generate model outputs $ f(X_i)$ and compute the predicted alignment scores $\hat A_i = g(X_i)$ 
for every $i\in[n+m]$. We shall use $(A_i,\hat{A}_i)_{i\in \cD_{\calib}}$ to 
determine the selection set. 

Following the framework of \emph{conformalized selection}~\cite{jin2023selection,jin2023model}, 
we gather statistical evidence for trustworthy outputs via hypothesis testing, where the null hypothesis for $j\in[m]$ is 
\#\label{eq:align_hypothesis} 
H_j:~A_{n+j} \le c.
\#
Rejecting $H_j$ then reflects evidence that the (true) alignment score 
of unit $j$ is above the threshold $c$, 
and therefore the generated output $f(X_{n+j})$ is aligned. Under this framework, the task of selecting 
aligned units boils down to simultaneously testing the $m$ hypotheses specified in~\eqref{eq:align_hypothesis}.  
For this purpose, we 
construct the {\em conformal p-value}: for any $j\in[m]$,
\#\label{eq:conf_pval}
p_j = \frac{1+\sum_{i \in \cD_\calib} \ind\{A_i \le c, \hat A_i \ge \hat A_{n+j}\}}
{|\cD_\calib|+1}.
\# 
It can be shown 
that when the test unit is exchangeable with the calibration units, 
the p-value defined above is valid in the sense that 
$\PP(p_j \le t, A_{n+j} \le c) \le t$ for any $t\in (0,1)$~\cite{jin2023selection}.
Intuitively, when a generated output is likely to be aligned, we expect 
$\hat A_{n+j}$ to have a large magnitude, and $p_j$ to be small. As a result, 
we reject the null hypothesis---that is, declaring a sufficiently large alignment score---for a small $p_j$,
where the threshold for p-values is determined by the Benjamini-Hochberg (BH) procedure~\cite{benjamini1995controlling}. Concretely, let 
$p_{(1)} \le \dots p_{(m)}$ denote the ordered statistics of the p-values;
the rejection set of BH applied to the conformal p-values 
is $\cS = \{j \in[m]: p_j \le \alpha k^* / m\}$, where 
\$
k^* = \max\Big\{k\in[m]: p_{(k)} \le \frac{\alpha k}{m}\Big\},
\$
with the convention that $\max \varnothing = 0$.
We describe the complete procedure in Algorithm~\ref{alg:ca}, and 
establish its validity in Theorem~\ref{thm:fdr}.
For notational simplicity, we denote $Z_i = (X_i,E_i)$ for all $i\in [n+m]$ 
(note that $E_{i}$ is not observable for $i > n$). 

\begin{algorithm}[h]
\caption{Conformal Alignment}
\label{alg:ca}
\begin{algorithmic}[1]
\REQUIRE Pre-trained foundation model $f$;  alignment score function $\cA$; 
reference dataset $\cD = (X_i,E_i)_{i=1}^n$; test dataset $\cD_\test = (X_{n+j})_{j=1}^m$; 
algorithm for fitting alignment predictor $\cG$; 
alignment level $c$; target FDR level $\alpha$.
\STATE Compute the alignment score $A_i = \cA(f(X_i),E_i)$,  $\forall i \in \cD$.\;
\STATE Randomly split $\cD$ into two disjoint sets: the training set $\cD_{\tr}$ and 
the calibration set $\cD_{\calib}$.\;
\STATE Fit the alignment score predictor with $\cD_{\tr}$: 
$g \leftarrow \cG(\cD_{\tr})$.\; 
\STATE Compute the predicted alignment score: $\hat A_i \leftarrow g(X_i)$, 
 $\forall i\in \cD_\calib \cup \cD_\test$.\;
\FOR{$j \in [m]$}
\STATE Compute the conformal p-values $p_j$ according to Equation~\eqref{eq:conf_pval}.\;
\ENDFOR
\STATE Apply BH to the conformal p-values: $\cS \leftarrow \textnormal{BH}(p_1\dots,p_m)$.\;
\ENSURE The selected units $\cS$.
\end{algorithmic}
\end{algorithm}

\begin{theorem}\label{thm:fdr}
Suppose that for any $j\in[m]$, $\{Z_{n+j}\} \cup \{Z_i\}_{i\in \cD_{\calib}}$
are exchangeable conditional on $\{Z_{n+\ell}\}_{\ell \neq j}$, i.e., for any permutation $\pi$ of $\{1,\dots,n,n+j\}$ and any $\{z_1,\dots,z_n,z_{n+j}\}$, it holds that 
\$
&\PP\big(Z_1=z_1,\dots,Z_{n}=z_n,Z_{n+j}=z_{n+j} \given \{Z_{n+\ell}\}_{\ell \neq j}\big) \\
&= 
\PP\big(Z_1=z_{\pi(1)},\dots,Z_{n}=z_{\pi(n)},Z_{n+j}=z_{\pi(n+j)} \given \{Z_{n+\ell}\}_{\ell \neq j}\big).
\$ 
Suppose the predicted alignment score $\{\hat A_i\}_{i\in \cD_{\calib} \cup 
\cD_\test}$ have no ties almost surely and $\sup_x |g(x)| \le \bar{M}$ for some 
$\bar{M} >0 $. Then for any 
$\alpha \in (0,1)$, the output $\cS$ from Algorithm~\ref{alg:ca} 
satisfies FDR$\le \alpha$.
\end{theorem}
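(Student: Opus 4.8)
The plan is to reduce the claim to the finite‑sample FDR guarantee for conformalized selection \cite{jin2023selection,jin2023model}, by checking that its exchangeability hypothesis holds in the present setup, and then to recall the mechanism behind that guarantee. First I would treat the fitted predictor $g$ as fixed: it is a function of the training fold only, which is held out from the calibration and test units, so $\hat A_i=g(X_i)$ depends on unit $i$ only through $Z_i=(X_i,E_i)$ (finite‑valued since $\sup_x|g(x)|\le\bar M$). The assumed exchangeability of $\{Z_{n+j}\}\cup\{Z_i\}_{i\in\cD_{\calib}}$ given $\{Z_{n+\ell}\}_{\ell\neq j}$ then transfers to the augmented data $\{(Z_i,\hat A_i)\}$, which is precisely what is needed for the conformal p‑value $p_j$ in \eqref{eq:conf_pval} to be super‑uniform on the null, $\PP(p_j\le t,\,A_{n+j}\le c)\le t$, as recalled in the text after \eqref{eq:conf_pval}.

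The heart of the argument is to upgrade this marginal validity so that it survives the data‑dependent BH cutoff, which couples all $m$ p‑values. I would establish that $(p_1,\dots,p_m)$ enjoys a PRDS‑type positive dependence with respect to the (random) null set $\mathcal{H}_0=\{j:A_{n+j}\le c\}$. The device is the usual conformal one: encode each calibration point by the masked score $V_i=\hat A_i$ if $A_i\le c$ and $V_i=-\infty$ otherwise, and each test point by $\hat A_{n+j}$, noting that on $\{A_{n+j}\le c\}$ the test encoding coincides with $V_{n+j}$; then $p_j=\bigl(1+|\{i\in\cD_{\calib}:V_i\ge \hat A_{n+j}\}|\bigr)/(|\cD_{\calib}|+1)$. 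Conditioning on the unordered multiset of all calibration scores together with the null test scores (and on the non‑null test scores and the training fold), the only remaining randomness is the exchangeable assignment of ``which unit is which element,'' and each null $p_j$ is a coordinatewise non‑increasing function of its own encoding with the rest held fixed; exchangeability plus monotonicity give the PRDS property and, simultaneously, the conditional null‑super‑uniformity $\PP(p_j\le t\mid\cdot)\le t$. An equivalent route, avoiding PRDS, is the direct leave‑one‑out argument: write $\tfrac{\ind\{j\in\cS\}}{|\cS|\vee1}=\tfrac{\ind\{p_j\le\alpha R_j/m\}}{R_j}$ where $R_j$ is the BH rejection count with $p_j$ forced to $0$, and bound each summand using the same conditioning.

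Given either structure, I would conclude by invoking the classical fact that BH controls FDR under positive dependence: summing the per‑null contributions yields $\fdr\le\alpha\,\EE[|\mathcal{H}_0|]/m\le\alpha$, with the no‑ties assumption guaranteeing that the ranks in \eqref{eq:conf_pval} and the BH ordering are almost surely well defined. I expect the main obstacle to be exactly the dependence bookkeeping in the middle step: since all $p_j$ share the calibration fold they are genuinely dependent, and one must pick the conditioning $\sigma$‑algebra so that the null p‑values become monotone functions of an exchangeable labeling \emph{and} retain conditional super‑uniformity — the masked‑score construction is what reconciles these, and carrying it through without losing a $1/|\cD_{\calib}|$ slack (which would break exact FDR control) is the delicate part.
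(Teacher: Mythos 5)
Your proposal matches the paper's proof in its essentials. The paper introduces $V(x,a)=2\bar M\,\ind\{a>c\}-g(x)$ with $V_i=V(X_i,A_i)$ and $\hat V_i=V(X_i,c)=-\hat A_i$, observes that \eqref{eq:conf_pval} then reads $p_j=\bigl(1+\sum_{i\in\cD_\calib}\ind\{V_i\le \hat V_{n+j}\}\bigr)/(|\cD_\calib|+1)$, checks that the transformed scores are a.s.\ tie-free, and invokes Theorem~2.6 of \cite{jin2023selection}; your masked-score encoding with $-\infty$ is the same device, modulo that the paper uses the finite offset $2\bar M$ so the cited theorem (stated for real-valued scores) applies verbatim. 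The only real difference is scope: the paper treats the FDR guarantee for BH applied to these conformal p-values as a black box once the reduction is made, whereas your middle paragraphs re-derive the internal mechanism (conditional super-uniformity plus PRDS, or equivalently the leave-one-out bound on $\ind\{j\in\cS\}/(|\cS|\vee1)$). That sketch is consistent with how \cite{jin2023selection} actually proves the result, but it is redundant here — once you have rewritten the p-values in the required form and verified exchangeability and no ties, you are done by citation, which is exactly what the paper does.
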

The proof of Theorem~\ref{thm:fdr} is adapted from~\cite{jin2023selection}, 
and we include it in Appendix~\ref{sec:proof} for completeness.
\begin{remark}
The exchangeability condition required in Theorem~\ref{thm:fdr} 
is satisfied when the samples in $\cD_{\calib} \cup \cD_\test$
are i.i.d., or when $\cD_{\calib}$ is drawn from a set of 
samples without replacement, or when they are nodes on graphs in the transductive setting~\cite{huang2024uncertaintygnn}. The assumption of no ties  
is without loss of generality, since one can always add a small random 
noise to break the ties.
\end{remark}

The following proposition characterizes the  power~\eqref{eq:power}, 
as well as the fraction of selected units that can be deployed with confidence, when the 
samples are i.i.d. The proof is in Appendix~\ref{sec:proof_power}. A finite-sample power analysis based on concentration inequalities is in Appendix~\ref{app:subsec_power_finite}. 

\begin{prop}\label{prop:power}
Under the same condition of Theorem~\ref{thm:fdr}, 
assume further that $(X_i,E_i)_{i \in [n+m]}$ are i.i.d.
Define $H(t) = \PP(A \le c, g(X) \ge t)$ and  
$t(\alpha) = \sup\{t: \frac{t}{\PP(H(g(X)) \le t)} \le \alpha\}$.
Suppose that there exists a sufficiently small $\varepsilon >0$ 
such that $\frac{t}{\PP(H(g(X)) \le t)} <\alpha$ for $t \in [t(\alpha)-\varepsilon,t(\alpha)]$, 
then 
\$
&\lim_{|\cD_{\calib}|,m\to\infty} \text{Power} = \PP(H(g(X)) \le t(\alpha) \given A>c). \\
&\lim_{|\cD_{\calib}|,m\to\infty} \frac{1}{m}\sum_{j=1}^m \ind\{j\in \cS, A_{n+j} >c\} = \PP(H(g(X)) \le t(\alpha), A>c).
\$
\end{prop}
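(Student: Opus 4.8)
The plan is to analyze the BH procedure applied to the conformal p-values in the large-sample limit and show the rejection threshold converges to a deterministic level $t(\alpha)$ expressed through the population functional $H$. First I would rewrite the conformal p-value: since $p_j = \frac{1 + \sum_{i\in\cD_\calib}\ind\{A_i\le c,\, \hat A_i \ge \hat A_{n+j}\}}{|\cD_\calib|+1}$, and $\hat A_i = g(X_i)$, the empirical count $\frac{1}{|\cD_\calib|}\sum_{i\in\cD_\calib}\ind\{A_i \le c,\, g(X_i)\ge g(X_{n+j})\}$ converges uniformly (over the threshold $g(X_{n+j})$, by a Glivenko--Cantelli argument using that the class of sets $\{g(x)\ge t\}$ indexed by $t$ is a VC class) to $H(g(X_{n+j}))$. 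Hence $p_j \approx H(g(X_{n+j}))$ uniformly in $j$. So the BH procedure on $\{p_j\}$ behaves like BH on the transformed scores $\{H(g(X_{n+j}))\}_{j=1}^m$, which are i.i.d.\ draws from the law of $H(g(X))$.

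Next I would invoke the standard characterization of the BH threshold in the i.i.d.\ limit. Writing $F_m(t) = \frac{1}{m}\sum_{j=1}^m \ind\{p_j \le t\}$ for the empirical CDF of the p-values, the BH rejection rule selects all $j$ with $p_j \le \hat t$ where $\hat t$ is the largest $t$ in the grid with $\frac{t}{F_m(t)} \le \alpha$ (up to the $k^*/m$ discretization, which is $O(1/m)$ and vanishes). Since $F_m(t) \to \PP(p \le t) = \PP(H(g(X)) \le t) =: F(t)$ uniformly, the fixed point $\hat t$ converges to $t(\alpha) = \sup\{t : t/F(t) \le \alpha\}$. The extra assumption that $t/\PP(H(g(X))\le t) < \alpha$ strictly on $[t(\alpha)-\varepsilon, t(\alpha)]$ is exactly what rules out the pathological case where the ratio only touches $\alpha$ tangentially; it guarantees $\hat t \to t(\alpha)$ rather than getting stuck at a smaller crossing, so the convergence of the threshold is stable. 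This is the step I expect to be the main obstacle: carefully handling the interplay between the $1/(|\cD_\calib|+1)$ correction in $p_j$, the discretization error $\alpha k^*/m$ in BH, and the two sources of randomness ($\cD_\calib$ through $H$-estimation and $\cD_\test$ through the empirical distribution of scores), and showing none of these perturbs the limiting threshold under the stated strict-inequality condition.

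Finally, with $\hat t \to t(\alpha)$ established, the selected set is (asymptotically) $\cS = \{j : H(g(X_{n+j})) \le t(\alpha)\}$. Then
\$
\frac{1}{m}\sum_{j=1}^m \ind\{j\in\cS,\, A_{n+j} > c\} \to \PP\big(H(g(X)) \le t(\alpha),\, A > c\big)
\$
by the law of large numbers over the i.i.d.\ test units, giving the second display. For the power, the denominator $\frac{1}{m}\sum_j \ind\{A_{n+j}>c\} \to \PP(A>c)$, and dividing yields
\$
\power \to \frac{\PP\big(H(g(X))\le t(\alpha),\, A>c\big)}{\PP(A>c)} = \PP\big(H(g(X)) \le t(\alpha) \given A > c\big),
\$
which is the first claim. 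Throughout, the boundedness $\sup_x|g(x)|\le \bar M$ is used to keep the relevant function classes uniformly bounded so the Glivenko--Cantelli/uniform-convergence steps go through, and the no-ties assumption ensures the thresholding events have no atoms that could obstruct the continuity arguments.
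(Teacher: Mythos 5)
Your proposal is correct in outline, but it takes a genuinely different route from the paper. The paper's proof is a short reduction: it defines a score $V(x,a) = 2\bar M\cdot\ind\{a>c\} - g(x)$ (the same construction used in the proof of Theorem~\ref{thm:fdr}), observes that the distribution function $F(v)=\PP(V(X,A)\le v)$ satisfies $F(V(X,c)) = H(g(X))$, and then directly invokes Proposition 2.10 of~\cite{jin2023selection}, which already contains the entire asymptotic BH analysis. Your plan, by contrast, re-derives that asymptotic analysis from scratch for this specific setting: you show $p_j$ concentrates uniformly around $H(g(X_{n+j}))$ via a Glivenko--Cantelli argument over the calibration set, characterize the BH threshold as the supremal fixed point $\hat t \approx \sup\{t : t/F_m(t)\le\alpha\}$, argue $\hat t \to t(\alpha)$ using the strict-inequality condition on $[t(\alpha)-\varepsilon, t(\alpha)]$, and then apply the LLN over test units to get both limiting quantities. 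This is essentially what must sit inside the cited Proposition 2.10, so you have effectively unrolled the black box. The paper's route is shorter and outsources the delicate two-source-of-randomness bookkeeping (calibration estimation error in $p_j$ versus test-set empirical-CDF fluctuations, plus the $1/(|\cD_\calib|+1)$ and $\alpha k^*/m$ discretizations) to an existing result; your route is self-contained and makes the role of the strict-inequality assumption and the shape of $t(\alpha)$ transparent. If you were to write yours up formally, the main care points are exactly the ones you flag: uniform (not just pointwise) convergence of the p-value map and of $F_m$, and showing the threshold convergence is stable to those two layers of stochastic perturbation simultaneously; the boundedness $\sup_x|g(x)|\le\bar M$ and the no-ties condition are then used just as you say.
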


\begin{wrapfigure}{R}[0pt]{0.35\textwidth}
    \centering    
    \vspace{-3em}
\includegraphics[width=0.35\textwidth]{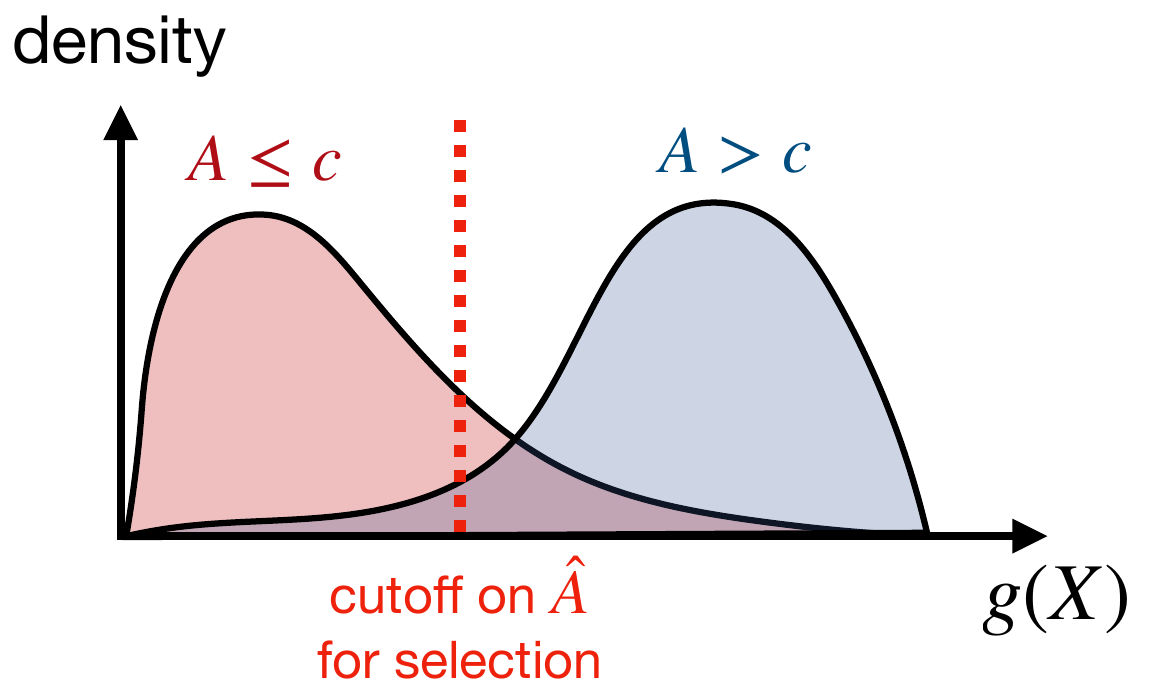}
    \vspace{-1em}
    \caption{Visualization of asymptotic selection rule (red dashed line), with density curves of $g(X)$ for $A\leq c$ (red) and $A>c$ (blue).}
    \label{fig:power}
\end{wrapfigure}

Figure~\ref{fig:power} visualizes the asymptotic cutoff on $g(X)$ beyond which will be selected by our method: it is the smallest value such that the red area on the right is less than $\alpha$-proportion of red and blue areas combined. 
The area of the blue part on the right of the cutoff is thus the asymptotic fraction of selected units, whose proportion among the entire blue area is the asymptotic power. 
Intuitively, given a model (i.e., fixing areas for the red and blue), the power of Conformal Alignment depends on how well $g$ discerns those $A>c$ against those $A\leq c$. Therefore, it is important to identify features that informs the alignment of the outputs; we will elaborate on this point and deliver practical recommendations in our experiments in Sections~\ref{sec:qa} and~\ref{sec:xray}. In addition, the fraction of deployable units (blue) additionally depends on the model power, i.e., the fraction of units whose outputs are indeed aligned. These two factors both affect the fraction of units selected by our method (the blue area on the right of the cutoff), which will also be demonstrated in the experiments.



\section{Experiments for question answering (QA)}\label{sec:qa}

In this section, we implement and evaluate Conformal Alignment in question-answering tasks,
%
where we consider a conversational question answering dataset {\bf TriviaQA} \cite{joshi2017triviaqa} and a closed-book reading comprehension dataset {\bf CoQA} \cite{reddy2019coqa}. For each QA dataset, we use language models {\bf OPT-13B} \cite{zhang2023opt} and {\bf LLaMA-2-13B-chat} \cite{touvron2023llama} without finetuning to generate an answer $f(X_i)$ via top-p sampling for each input $X_i$ following the default configuration. 
%
The alignment score $A_i$ measures
how well the generated answer matches the true reference answers provided in both datasets. Following \cite{kuhn2023semantic,lin2023generating}, we let $A_i \in \{0,1\}$ indicate whether the \texttt{rouge-L} score \cite{lin2004rouge} between the LLM-generated answer and reference answer is no less than $0.3$, and set $c = 0$. 
See Appendix~\ref{sec:app-qa} for more details. 

\vspace{-0.5em}
\paragraph{Dataset partition.} 
Recall $\cD$ is the reference dataset and $\cD_{\rm test}$ is the test set we select from. We fix $|\cD_{\rm test}|=500$ and split $\cD$ as follows. Fixing $\gamma_1,\gamma_2 \in (0,1)$, $\gamma_1 + \gamma_2 < 1$, we randomly sample $\lfloor (\gamma_1+\gamma_2) \cdot |\cD| \rfloor$ instances without replacement from $\cD$ as $\cD_{\tr}$ for training the alignment predictor $g$. Within $\cD_{\tr}$, a random subset of size $\lfloor \gamma_1 \cdot |\cD|\rfloor$ is reserved  for hyperparameter tuning in computing certain features to be introduced later, while the others are used in fitting $g$ given the features. We then set the calibration set $\cD_{\calib} = \cD \backslash  \cD_{\tr}$.  
For the results presented in this section, $\gamma_1=0.2$, $\gamma_2=0.5$.
Additional ablation studies for the choice of $(\gamma_1,\gamma_2)$ are summarized in Section~\ref{subsec:qa_result}.

\vspace{-0.5em}
\paragraph{Alignment score predictor.}
With the training set $\cD_{\tr}$, according to Algorithm~\ref{alg:ca}, 
our goal is to train an alignment score predictor $g$ that maps $X_i$ to 
$\hat A_i$---the estimated probability for $A_i> c$---based on 
available information such as the input, model parameters, and outputs. 
To this end, we train a classifier with binary label $A_i$ and the following features (as preparation, we additionally randomly generate $19$ answers for each input, leading to $M=20$ answers in total for each input):
%
\begin{itemize}[leftmargin=*, topsep=0pt]
    \item \emph{Self-evaluation likelihood} (\texttt{Self\_Eval}).
    Following~\cite{kadavath2022language,lin2023generating}, we ask the language model itself to evaluate the correctness of its answer and use the likelihood, 
    \texttt{P(true)}, as a measure of confidence.
    \item \emph{Input uncertainty scores} (\texttt{Lexical\_Sim}, \texttt{Num\_Sets}, \texttt{SE}). Following~\cite{kuhn2023semantic}, we compute a set of features that measure the uncertainty of each LLM input through similarity among the $M=20$ answers. The features include lexical similarity (\texttt{Lexical\_Sim}), i.e., the \texttt{rouge-L} similarity among the  answers. In addition, we use a natural language inference (NLI) classifier to categorize the $M$ answers into semantic groups, and compute the number of semantic sets (\texttt{Num\_Sets}) and semantic entropy (\texttt{SE}).  Following \cite{kuhn2023semantic, lin2023generating}, 
    we use an off-the-shelf DeBERTa-large model \cite{he2020deberta} as the NLI predictor.
    \item \emph{Output confidence scores} (\texttt{EigV(J/E/C)}, \texttt{Deg(J/E/C)}, \texttt{Ecc(J/E/C)}). We also follow~\cite{lin2023generating} to compute features that measure the so-called output confidence: with $M$ generations, we compute the eigenvalues of the graph Laplacian (\texttt{EigV}), the pairwise distance of generations based on the degree matrix (\texttt{Deg}), and the Eccentricity (\texttt{Ecc}) which incorporates the embedding information of each generation. Note that each quantity is associated with a similarity measure; we follow the notations in \cite{lin2023generating} and use the suffix \texttt{J}/\texttt{E}/\texttt{C} to differentiate similarities based on the Jaccard metric, NLI prediction for the entailment class, and NLI prediction for the contradiction class, respectively. 
\end{itemize}
We use all above features to train the alignment predictor with standard ML models including logistic regression, random forests, and XGBoost. Existing works often use individual features in calibrating model confidence~\cite{kadavath2022language,lin2023generating,kuhn2023semantic}; in this regard, we evaluate these features by using each individual feature to train the predictor $g$ and reporting the resulting power, which delivers additional insights upon their informativeness in predicting model alignment.  


\subsection{Experiment results}
\label{subsec:qa_result}
This section reports the results on the TriviaQA dataset, 
where a subset of $8000$ questions are considered, 
with the alignment predictor trained via {\it logistics regression};
additional results when the predictor is trained via random forest and 
XGBoost are in Appendix~\ref{subsec:qa_fdr_trivia}. 
Parallel results on the CoQA dataset with $7700$ questions
 can be found in Appendix~\ref{app:subsec_coqa}.
We also provide QA examples in Table~\ref{tab:qa-eg} in Appendix~\ref{app:subsec_qa_example} to illustrate the performance of our method.

\vspace{-0.8em}
\paragraph{Conformal Alignment strictly controls the FDR while heuristic baseline does not.} 

Figure~\ref{fig:fdr-triviaqa} shows the realized FDR and power for Conformal Alignment at target FDR levels $\alpha\in\{0.05,0.1,\dots,0.95\}$, averaged over $500$ independent experiments. 
The FDR curves demonstrate the finite-sample error control. It is worth noting that the FDR is tightly controlled; it becomes a constant for sufficiently large $\alpha$ under which all units can be selected without violating the FDR.

To demonstrate the importance of a statistically rigorous treatment, we compare Conformal Alignment with a heuristic baseline, where one asks the language model to evaluate its confidence (i.e., the \texttt{Self\_Eval} feature above), and threshold the obtained likelihood $\{ q^{\rm self}_j \}$ with a cutoff of $1-\alpha$ as if it were the ``probability'' of alignment, i.e., 
$\cS_{\rm baseline} = \{j \in [m]: q^{\rm self}_j \geq 1-\alpha\}$. The resulting FDR and power for the TriviaQA dataset are in Figure~\ref{fig:fdr-triviaqa-base}.  The baseline fails to control FDR, showing that the self-evaluation score is over-confident. In addition, it is also less powerful than our procedure despite the higher error rate, implying that the self evaluation is less informative for predicting alignment. Comparing  LLaMA-2-13B-chat and OPT-13B, we also observe that more powerful models may tend to be more (over-)confident in their output.

\begin{figure}
    \centering
    \includegraphics[width=0.93\textwidth]{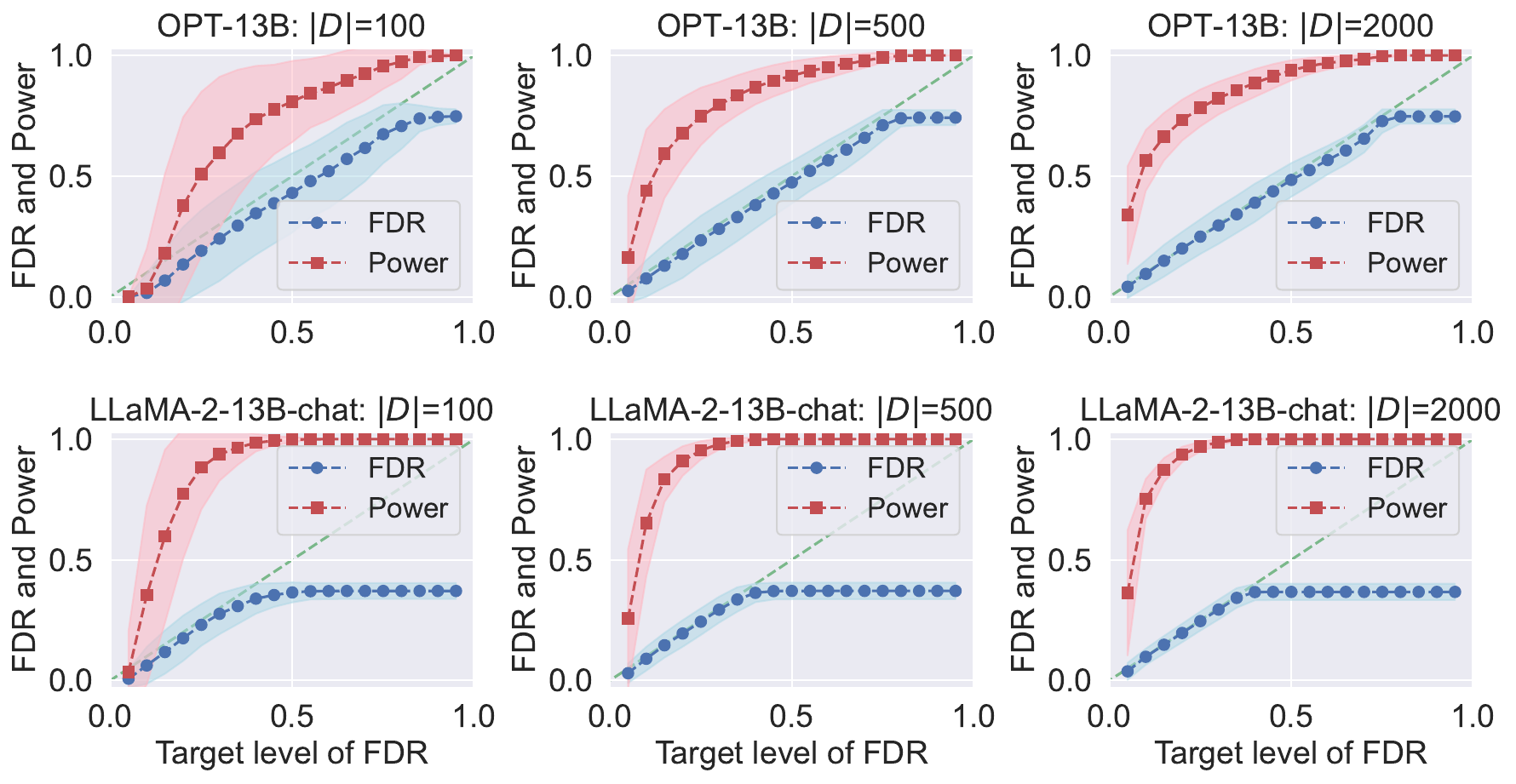}
    \vspace{-1em}
    \caption{Realized FDR (blue) and power (red) for conformal alignment applied to the TriviaQA dataset (with $\gamma_1=0.2, \gamma_2=0.5$) at various FDR target levels. The top row corresponds to the results from OPT-13B and the bottom 
    row to those from LLaMa-2-13B-chat; each column corresponds to a value of $|\cD|$. Shading represents the area between one standard deviation above and below the mean.}
    \label{fig:fdr-triviaqa}
\end{figure}
\begin{figure}
    \centering
    \includegraphics[width=0.93\textwidth]{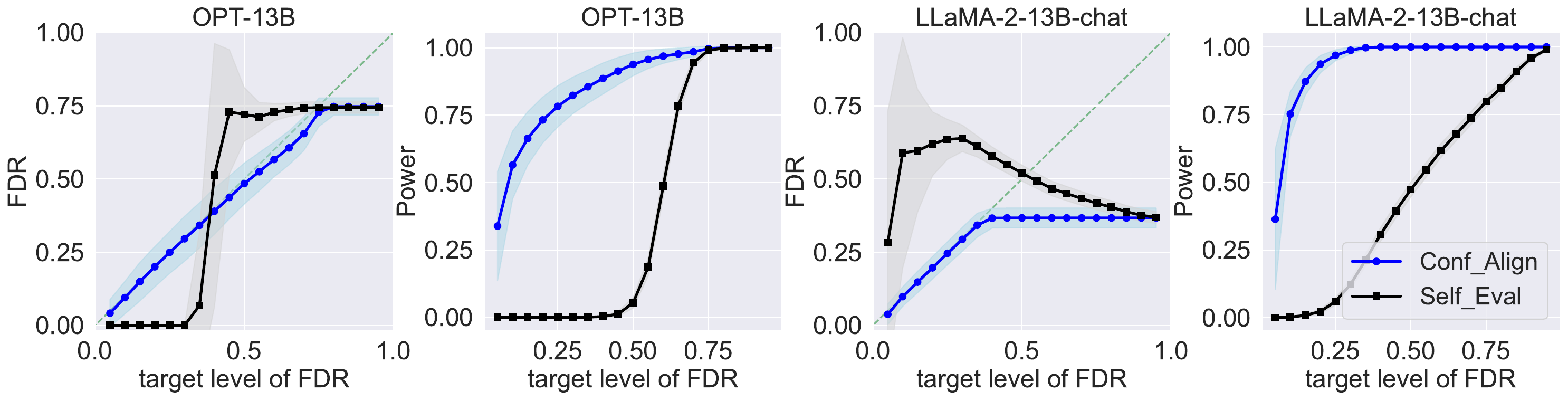}
    \vspace{-1em}
    \caption{Comparison of FDR and power between Conformal Alignment and 
    the heuristic baseline where we select units by thresholding self-evaluation scores \texttt{Self\_Eval} with a cutoff at $1-$the target FDR level. 
    The comparison is conducted on the TriviaQA dataset.}
    \label{fig:fdr-triviaqa-base}
\end{figure}

\vspace{-0.8em}
\paragraph{More powerful model enables more powerful selection.}
The performance of Conformal Alignment also varies with the language models. We observe that more powerful LLMs can generate answers with higher accuracy which further leads to stronger signals in selection. For OPT-13B, the largest value of FDR is $0.5$; we have nearly no discoveries and the power is close to zero  when $\alpha=0.05$. In comparison, with LLaMA-2-13B-chat, FDR never exceeds $0.25$, and the power is as high as $0.50$ even at an FDR level of $\alpha=0.05$. As such, one may also use the FDR/power curve from Conformal Alignment as a measure to compare LLMs' performance. 

\vspace{-0.8em}
\paragraph{What feature is informative for alignment?}
In addition to the capability of LLMs, the features used to train $g$ also play a vital role in the power of selection. To compare the informativeness of the aforementioned features for the alignment of outputs, we use one score at a time as the feature in training $g$ and show Power/FDR curves in Figure~\ref{fig:fdr-triviaqa-roc}. When using the OPT-13B model, \texttt{Self\_Eval} and \texttt{Num\_Setss} are the least powerful, while \texttt{Deg(J)} and \texttt{Ecc(J)} are the most powerful in predicting alignment. Such a comparison is similar for LLaMA-2-13B-chat. We defer the FDR curves to  Figure~\ref{fig:fdr-triviaqa-roc-fdr} in Appendix~\ref{app:subsec_fdr_ind_feature} which remain strictly controlled. We include a more detailed analysis on feature importance via Shapley value in Appendix~\ref{app:subsec_feature_importance}.

\begin{figure}
    \centering
    \includegraphics[width=0.78\textwidth]{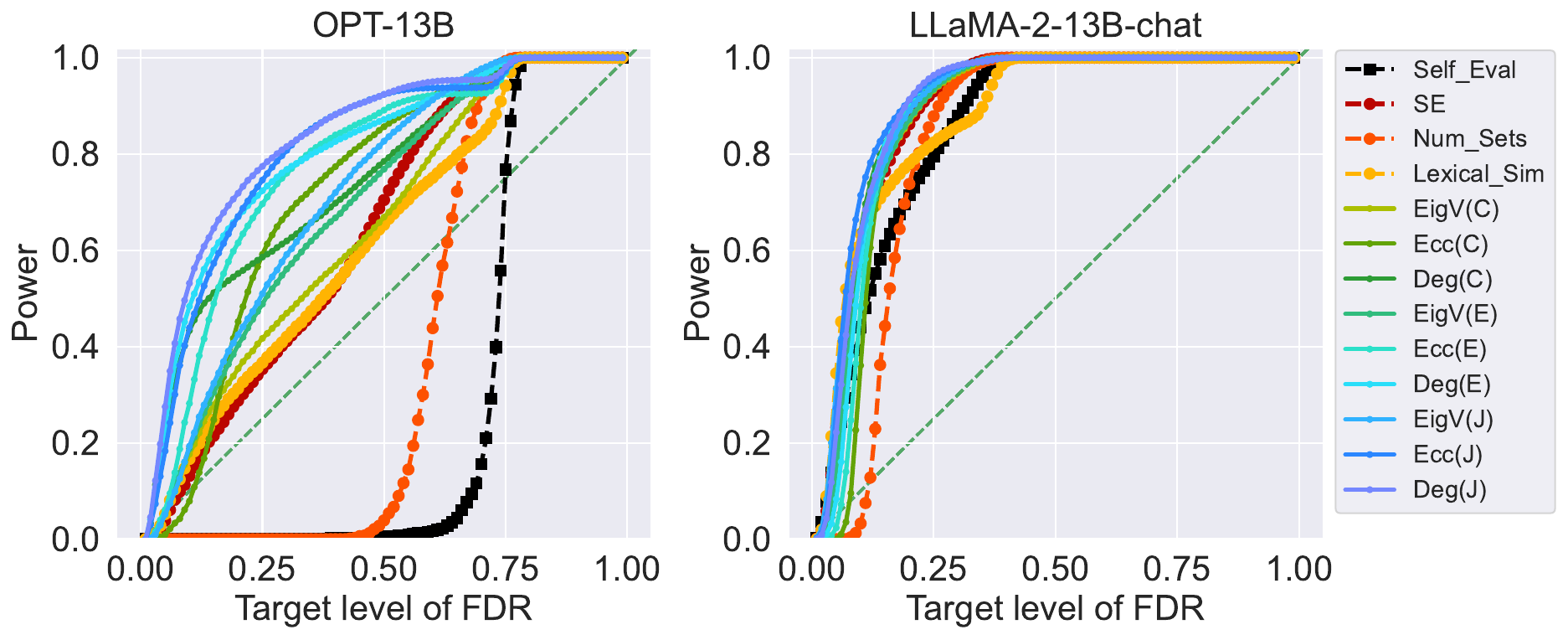}
    \caption{Power versus target FDR levels for TriviaQA dataset when the alignment predictor is trained with logistic regression over individual features with $|\cD|=2000$, $\gamma_1=0.2$, $\gamma_2=0.5$, averaged over $500$ independent experiments. Note that the FDR is always controlled though not depicted.}
    \label{fig:fdr-triviaqa-roc}
\end{figure}

\vspace{-0.8em}
\paragraph{How many high-quality samples are needed?} 
The size of high-quality samples $|\cD|$ needed is important for applying Conformal Alignment. 
In practice, it is desired if a handful of data suffices to 
accurately identify aligned outputs. To study the effect of reference sample size $|\cD|$ on the performance, in Figure~\ref{fig:fdr-triviaqa}, we vary $|\cD|$ in $\{100, 500, 2000\}$ with fixed $\gamma_1 = 0.2$ and $\gamma_2 = 0.5$. In general, \emph{a few hundred} high-quality samples suffice to achieve relatively high power, showing that Conformal Alignment does not require too expensive extra labeling.
More specifically, we observe that when $|\cD|$ increases, there is 
a slight improvement in power, especially when $\alpha$ is small; 
for large values of $\alpha$, the power is robust to $|\cD|$. In addition, a larger labeled sample size  reduces the variance in both FDR and Power and improves the stability in selection. 


\vspace{-0.8em}
\paragraph{Ablation study.}
In Appendix~\ref{app:subsec_ablation_qa}, we present ablation studies for the choice of $\gamma_1$ and $\gamma_2$ in data splitting. In practice, larger values of $\gamma_1$ and $\gamma_2$ allow more samples for training the alignment predictor but decrease the resolution of the conformal p-values. With fixed $|\cD|$ and $\gamma_2$, we observe that the performance of our method is not sensitive to the choice of $\gamma_1$; in particular, a small value of $\gamma_1$ (e.g.,~0.2) is sufficient for powerful selection. When fixing $|\cD|$ and $\gamma_1=0.2$, we suggest setting $\gamma_2 \in [0.3,0.5]$  to balance the sample sizes for both the predictor training and the BH procedure.



\section{Experiments for chest X-ray report generation}
\label{sec:xray}
In this section, we apply Conformal Alignment to  chest X-ray report (CXR) generation.
%
Following the pipeline in Figure~\ref{fig:workflow}, we apply our method to (a subset of) the MIMIC-CXR dataset \cite{johnson2019mimic}. 
In practice, imagine a healthcare institute employs a foundation model to automatically generate radiology reports based on chest radiographs. Conformal Alignment can be used to determine which reports to trust and refer to doctors for medical decisions, and the rigorous guarantees it offers imply that $1-\alpha$ fraction of the approved reports indeed match the results if they were to be read by a human expert. 

We obtain the base foundation model $f$ by fine-tuning an encoder-decoder model (pre-trained Vision-Transformer and GPT2) using a separate training set. For a generated report $f(X_i)$ with a reference report written by a radiologist, the alignment score $A_i$ is defined as follows: we use CheXbert~\cite{smit2004chexbert} to convert both reports to two 14-dimensional vectors of binary labels with each coordinate as the indicator of positive observations; we then define $A_i \in\{0,1\}$ as the indicator of whether there are at least $12$ matched labels, and set $c = 0$.
We employ the same set of features and the same procedure in Section~\ref{sec:qa}, except for \texttt{Self\_Eval} which is not available from the current model.  More details about the experimental setup are in Appendix~\ref{sec:app-cxr}.

\subsection{Experiment results}

We summarize the main findings from our experiments below; additional results with random forest and XGBoost predictors, as well as ablation studies for the choice of hyperparameters, are deferred to Appendix~\ref{app:cxr_exp}.  
To illustrate the role of the alignment score predictor, we also present examples of CXR images with reference and generated reports in Table~\ref{tab:cxr-eg} of Appendix~\ref{app:subsec_cxr_example}. 

\vspace{-0.6em}
\paragraph{Tight FDR control.}
Figure~\ref{fig:fdr-cxr} presents the FDR and power curves for Conformal Alignment where the alignment predictor is trained with all features using logistic regression,  averaged over $500$ independent runs of the procedure.  Again, we observe that the FDR is tightly controlled at the target level; it remains constant for sufficiently large $\alpha$ since by then all units can be selected without violating the FDR. We also observe satisfactory power, although it is lower than that is the QA tasks since CXR generation can be a more challenging task, and the model in use is fine-tuned with a limited number of training samples. We shall expect even higher power for better tuned models. 

\begin{figure}
    \centering
    \includegraphics[width=0.95\textwidth]{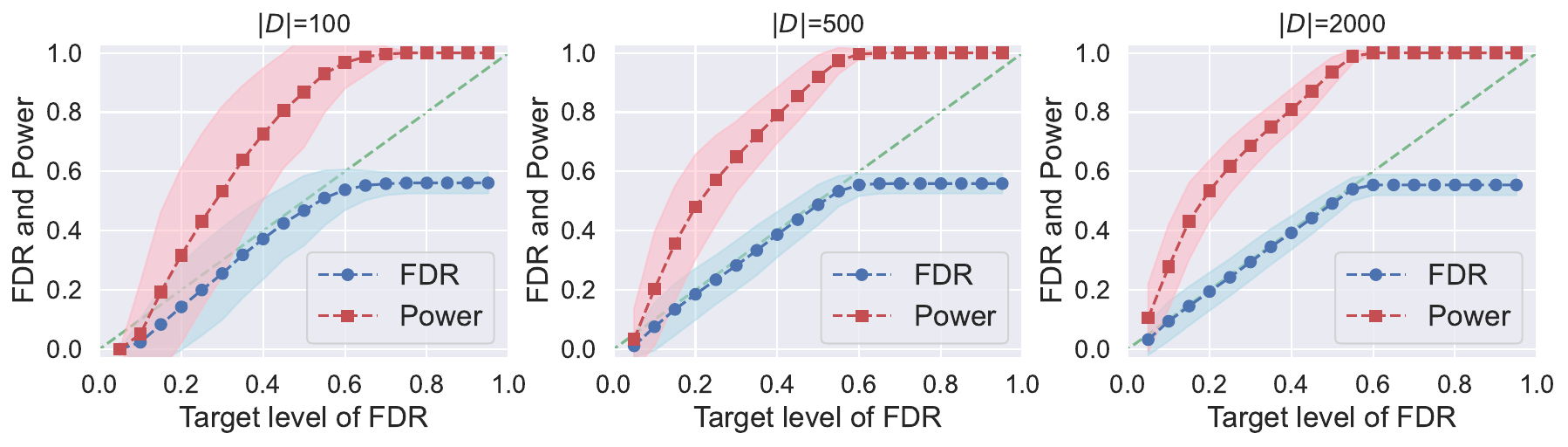}
    \vspace{-1em} 
    \caption{Realized FDR (blue) and power (blue) for Conformal Alignment 
    applied to CXR report generation with alignment predictor from logistic regression. The results are averaged over $500$ runs with  $\gamma_1=0.2, \gamma_2=0.5$. Each subplot corresponds to one value of $|\cD|$.}
    \label{fig:fdr-cxr}
\end{figure}

\begin{wrapfigure}{R}[0pt]{0.47\textwidth}
    \centering    \includegraphics[width=0.47\textwidth]{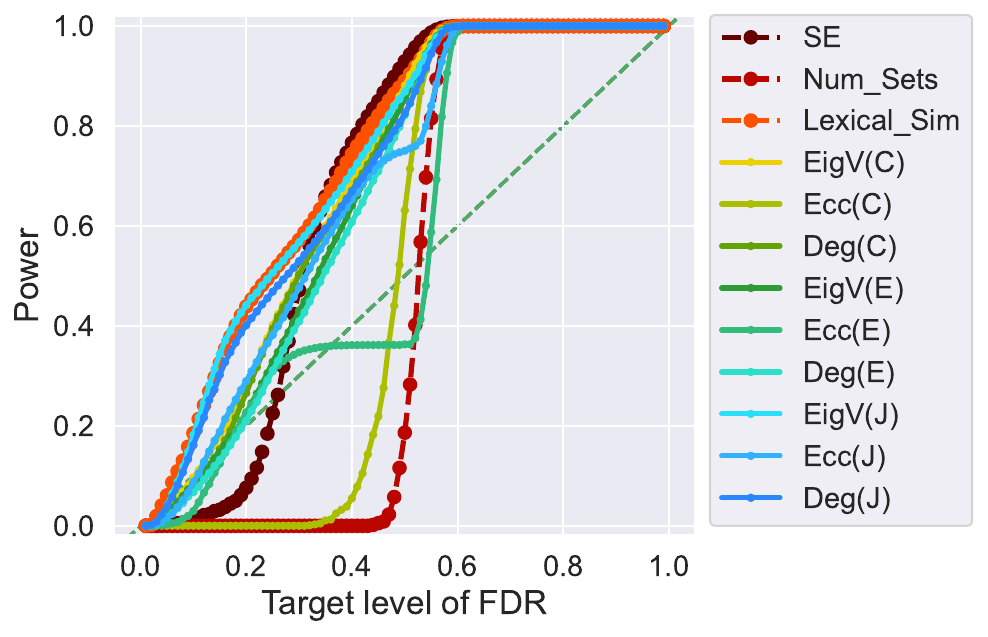}
    \vspace{-1em}
    \caption{Power at each FDR level for CXR dataset when the alignment predictor is trained with logistic regression over each individual feature, fixing $|\cD|=2000$, $\gamma_1=0.2$, $\gamma_2=0.5$.}
    \label{fig:fdr-cxr-roc}
\end{wrapfigure}

\vspace{-0.6em}
\paragraph{Hundreds of reference data suffice.} Comparing subplots in Figure~\ref{fig:fdr-cxr}, as the size of the reference data $\cD$ increases, we observe a slight improvement in power. However, we again find that $|\cD|=500$ already suffices for powerful deployment of Conformal Alignment. In practice, we expect this sample size also suffices for more powerful foundation models that are more carefully designed for this specific task. 

\vspace{-0.6em}
\paragraph{Informativeness of individual features.}
To study the informativeness of different features for alignment, we use one feature at a time to train the logistic regression classifier and compare the resulting power versus target FDR levels in Figure~\ref{fig:fdr-cxr-roc} from Conformal Alignment (curves showing FDR control are in Figure~\ref{fig:fdr-cxr-roc-fdr} of Appendix~\ref{app:subsec_cxr_ind}). In the CXR task, features based on Jaccard similarity are still the most powerful while those based on NLI prediction exhibit slightly lower power. More complicated texts and more tokens in CXR reports could be potential reasons for the relatively poor performance of NLI prediction.


\section{Discussion and limitations} 
We have presented Conformal Alignment, a principled framework 
for ensuring the alignment of foundation model outputs with 
provable, distribution-free FDR control. 
Compared with standard conformal prediction, 
we provide a solution that may be more relevant to downstream tasks than existing ideas. We demonstrate that our framework is flexible, lightweight, and effective through concrete applications to 
question answering and radiology report generation tasks, where we also provide practical recommendations on sample size, data splitting, and feature engineering. 
 
The choice of alignment score in different applications remains an open problem.
For example, in the radiology report generation example, 
as pointed out by~\cite{yu2023evaluating}, 
there can be a discrepancy between the score given by CheXbert and that 
given by human evaluation; as such, it will be interesting to use our framework 
with actual human evaluation labels. 
More generally, interesting future directions include designing implementation tailored for other appropriate applications, tackling parallel or sequential outputs, improving power in data-scarce scenarios (e.g., with better base models and more informative features), extending to multi-task and in-context learning settings, and generalizing our framework to control other error notions beyond FDR that may be meaningful for downstream decisions.



\subsection*{Acknowledgment}
Z.R.~is supported by NSF grant DMS-2413135. The authors would like to thank the Wharton Research Computing team and Stanford Research Computing Center for the computational resources
provided and the great support from the staff members.

\bibliographystyle{plain}
\bibliography{reference}

\appendix


\section{Deferred theory and proofs}\label{sec:proof}
\subsection{Proof of Theorem~\ref{thm:fdr}}\label{sec:proof_fdr}
Let $V(x,a) = 2\bar{M} \cdot \ind\{a > c\} - g(x)$; define also  
$V_i = V(X_i,A_i)$ and $\hat V_i = V(X_i, c)$ for every $i \in [n+m]$. 
By construction, $\hat V_i = -\hat A_i$; $V_i$ equals to 
$-\hat A_i$ when $A_i \le c$ and equals to $2\bar{M} - \hat A_i$ otherwise.
The conformal p-value defined in~\eqref{eq:conf_pval} 
can be written as 
\$ 
p_j = \frac{1 + \sum_{i\in \cD_{\calib}}\ind\{A_i \le c, 
\hat A_i \ge  \hat A_{n+j}\} }{1+|\cD_{\calib}|}
= \frac{1 + \sum_{i\in \cD_{\calib}}\ind\{V_i \le \hat V_{n+j}\}}{1+|\cD_{\calib}|},
\$
which is the p-value considered in~\cite{jin2023selection}. 
It can also be checked that $\{V_i\}_{i\in \cD_{\calib}} \cup 
\{\hat V_{n+j}\}_{j\in [m]}$ have not ties almost surely.
Applying Theorem 2.6 of~\cite{jin2023selection} concludes
the proof.

\subsection{Proof of Proposition~\ref{prop:power}}
\label{sec:proof_power}
The proof is an adaptation of Proposition 2.10 of~\cite{jin2023selection}.
We first note that the proof of Proposition 2.10 goes through 
with $U=1$, which is the version we shall use in what follows.

To start, as in the proof of Theorem~\ref{thm:fdr}, we take 
$V(x,a) = 2M \cdot \ind\{a>c\} - g(x)$. Let 
$F(v) = \PP(V(X,A) \le v)$ for any $v\in \RR$.
Then 
\$
F(v) = \PP(V(X,A) \le v) & = 
\PP(A \le c, g(X) \ge -v) + \PP(A < c, 2M - g(X) \le v)\\
& = H(-v) + \PP(A < c, 2M - g(X) \le v).
\$
As a result, $F(V(X,c)) = F(-g(X)) = H(g(X))$, and 
for any $t \in \RR$, 
\$ 
\frac{t}{\PP(F(V(X,c)) \le t)} = 
\frac{t}{\PP(H(g(X)) \le t)}.
\$
Invoking Proposition 2.10 of~\cite{jin2023selection}, we have 
\$ 
\lim_{|\cD_{\calib}|,m \rightarrow \infty } 
\text{Power} & = 
\frac{\PP(F(V(X,c)) \le t(\alpha), A>c)}{\PP(A>c)} \\
& = 
\frac{\PP(H(g(X))\le t(\alpha), A>c)}{\PP(A>c)}\\
& = \PP(H(g(X)) \le t(\alpha) \given A>c).
\$
Similarly, 
\$
\lim_{|\cD_{\calib}|,m\to\infty} \textstyle{\frac{\sum_{j=1}^m \ind\{j\in \cS, A_{n+j} >c\}}{m}} 
& = 
\lim_{|\cD_{\calib}|,m\to\infty} \text{Power} 
\cdot \frac{\sum_{j=1}^m \ind\{A_j >0\}}{m}\\
& =  \PP(H(g(X)) \le t(\alpha) \given A>c) \cdot \PP(A >c)\\
& =  \PP(H(g(X)) \le t(\alpha), A>c).
\$
We have therefore completed the proof.
 
\subsection{Finite-sample power analysis}
\label{app:subsec_power_finite}

In this part, we provide a finite-sample power analysis based on concentration inequalities. We define 
\$
F_+(t) = \PP(g(X)\geq t, A>c),\quad F_-(t) = \PP(g(X)\geq t, A\leq c),\quad F(t) = \PP(g(X)\geq t).
\$
The empirical version of these quantities are defined as 
\$
&\hat{F}_m^+(t) = \frac{1}{m}\sum_{j=1}^m \ind\{ g(X_{n+j})\geq t, A_{n+j}>c), 
\quad 
\hat{F}_n^+(t) = \frac{1}{n}\sum_{i=1}^n \ind\{ g(X_i)\geq t, A_i>c), \\ 
&\hat{F}_m^-(t) = \frac{1}{m}\sum_{j=1}^m \ind\{ g(X_{n+j})\geq t, A_{n+j}\leq c), 
\quad 
\hat{F}_n^-(t) = \frac{1}{n}\sum_{i=1}^n \ind\{ g(X_i)\geq t, A_i\leq c), \\ 
&\hat{F}_m(t) = \frac{1}{m}\sum_{j=1}^m \ind\{ g(X_{n+j})\geq t), 
\quad 
\hat{F}_n(t) = \frac{1}{n}\sum_{i=1}^n \ind\{ g(X_i)\geq t), \\ 
\$
We also define $\hat{m}^+ = \sum_{j=1}^m \ind\{A_{n+j}>c\}$ as the number of 
truly aligned test outputs.

\begin{prop}\label{prop:finite_power}
Under the same condition of Theorem~\ref{thm:fdr}, 
assume further that $(X_i,E_i)_{i \in [n+m]}$ are i.i.d.
For any $\delta \in (0,1)$, with probability at least $1-\delta$,
\$ 
\frac{F_+(\tau_-) - \epsilon_m}{\PP(A>c)+\epsilon_m}
\le \frac{\sum_{j=1}^m \ind\{g(X_{n+j})\ge \hat \tau, A_{n+j}>c\}}
{\sum^m_{j=1}\ind\{A_{n+j}>c\}} 
\le \frac{F_+(\tau_+) + \epsilon_m}{\PP(A>c) - \epsilon_m},
\$
where 
\$
 \tau_+ := \inf\bigg\{ t\colon  \frac{1}{n+1}\cdot \frac{1+ n F_+(t) + n\epsilon_n }{F_m(t) - \epsilon_m} \leq \alpha \bigg\} , 
\quad \tau_- := \inf\bigg\{ t\colon  \frac{1}{n+1}\cdot \frac{1+ n F_+(t) - n\epsilon_n }{F_m(t) + \epsilon_m} \leq \alpha \bigg\},
\$
and $\epsilon_m = \sqrt{\log(8/\delta/(2n))}$, 
$\epsilon_n = \sqrt{\log(8/\delta)/(2m)}$.
\end{prop}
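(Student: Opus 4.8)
\textbf{Proof proposal for Proposition~\ref{prop:finite_power}.}

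The plan is to carry out a finite-sample version of the asymptotic argument in Proposition~\ref{prop:power}, replacing the population quantities $H$, $t(\alpha)$, and $\PP(A>c)$ by their empirical analogues and controlling the deviations via concentration. The key object to pin down is the data-driven BH threshold $\hat\tau$ on the score $g(X)$: by the equivalence between the BH procedure on conformal p-values and a threshold rule on the monotone statistic (as already used in the proof of Theorem~\ref{thm:fdr} and in Proposition~2.10 of~\cite{jin2023selection}), rejecting $H_j$ is equivalent to $g(X_{n+j}) \ge \hat\tau$, where $\hat\tau$ is the smallest $t$ such that the ``empirical FDP estimate''
\$
\widehat{\mathrm{FDP}}(t) = \frac{1}{|\cD_\calib|+1}\cdot\frac{1+\sum_{i\in\cD_\calib}\ind\{A_i\le c,\, g(X_i)\ge t\}}{\widehat F_m(t)}
\$
falls at or below $\alpha$. (Here I use $n$ loosely for $|\cD_\calib|$ as the statement does; in a clean write-up I would either rename or carry the split constant $\gamma_2$, but the argument is identical.)

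First I would establish a uniform deviation bound: since $g$ is bounded, the classes $\{x\mapsto \ind\{g(x)\ge t\}\}$ (and its intersections with $\{A\le c\}$, $\{A>c\}$) are VC classes of index $1$, so by the DKW inequality applied to the pushforward of $g(X)$ (together with the binary labels), with probability at least $1-\delta$ all of $|\widehat F_n^{\pm}(t) - F_\pm(t)|$, $|\widehat F_n(t)-F(t)|$, $|\widehat F_m^{\pm}(t)-F_\pm(t)|$, $|\widehat F_m(t)-F(t)|$ are simultaneously at most $\epsilon_n$ resp. $\epsilon_m$, uniformly in $t$, where $\epsilon_n,\epsilon_m$ are the stated square-root-log rates and the $8$ in $\log(8/\delta)$ absorbs the union over the (at most) four families and the two sample sizes. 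On this event I would also record $|\frac1m\sum_j\ind\{A_{n+j}>c\} - \PP(A>c)|\le\epsilon_m$. Second, I would sandwich $\hat\tau$ deterministically on this event: because $\sum_{i\in\cD_\calib}\ind\{A_i\le c, g(X_i)\ge t\} \le n\,\widehat F_n^-(t) \le n(F_-(t)+\epsilon_n) \le n(F_+(t)+\epsilon_n)$ is false in general — I need the correct monotone quantity — let me instead bound the numerator of $\widehat{\mathrm{FDP}}$ above and below by $1 + n\widehat F_n^-(t)\in[1+nF_-(t)-n\epsilon_n,\,1+nF_-(t)+n\epsilon_n]$ and the denominator by $\widehat F_m(t)\in[F(t)-\epsilon_m, F(t)+\epsilon_m]$; this shows $\hat\tau$ lies between $\tau_-$ and $\tau_+$ as defined (note the statement uses $F_+$ where the monotone nonconformity count actually tracks $F_-$ — I would flag this as a typo to be corrected, writing $F_-$ throughout the definitions of $\tau_\pm$, since the count is over $\{A_i\le c\}$). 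By monotonicity of $F_+$, $F(t)$ decreasing in $t$, $\tau_-\le\hat\tau\le\tau_+$.

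Third, I would convert the threshold sandwich into the claimed power sandwich: the realized power numerator is $\sum_j\ind\{g(X_{n+j})\ge\hat\tau, A_{n+j}>c\} = m\,\widehat F_m^+(\hat\tau)$, which by monotonicity in the threshold satisfies $m\,\widehat F_m^+(\tau_+)\le m\,\widehat F_m^+(\hat\tau)\le m\,\widehat F_m^+(\tau_-)$, and then applying $|\widehat F_m^+(\cdot)-F_+(\cdot)|\le\epsilon_m$ at the fixed (data-independent, on this event... actually $\tau_\pm$ depend on the data through $\widehat F_m$, so I should be slightly careful) — here I would either take $\tau_\pm$ as the population quantities obtained by replacing $\widehat F_m$ by $F$ in their definitions (which is the cleanest and matches the intent), or invoke the uniform bound which holds simultaneously for all thresholds including random ones. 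Dividing by $\sum_j\ind\{A_{n+j}>c\}\in[m(\PP(A>c)-\epsilon_m), m(\PP(A>c)+\epsilon_m)]$ yields exactly the two-sided bound in the statement.

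\textbf{Main obstacle.} The genuinely delicate step is the second one: making the ``$\hat\tau$ lies between $\tau_-$ and $\tau_+$'' argument rigorous despite $\hat\tau$ being defined by a \emph{minimum} over $t$ of a ratio whose numerator and denominator are both empirical and both being perturbed — one must check that perturbing numerator up and denominator down uniformly produces a threshold that is an upper bound for $\hat\tau$ (and symmetrically), which requires the ratio $\widehat{\mathrm{FDP}}(t)$ to be, if not monotone, at least such that its first downcrossing of $\alpha$ moves in the expected direction under these one-sided perturbations. This is exactly the kind of bookkeeping done in the BH/conformal-selection literature, so I would lean on the structure already present in~\cite{jin2023selection}, but it is where the care is needed; everything else is DKW plus monotonicity.
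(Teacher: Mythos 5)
Your proposal follows essentially the same route as the paper's proof: rewrite BH on the conformal p-values as a threshold rule $\cS=\{j\colon g(X_{n+j})\ge \hat\tau\}$ with $\hat\tau=\inf\{t\colon \widehat{\mathrm{FDR}}(t)\le\alpha\}$, apply DKW uniformly to get a two-sided sandwich of $\widehat{\mathrm{FDR}}(t)$, deduce $\tau_-\le\hat\tau\le\tau_+$, then use monotonicity of $\hat F_m^+$ in the threshold to convert to a power sandwich and divide by the empirical count of $\{A_{n+j}>c\}$. The one technical difference: the paper avoids any VC-class bookkeeping by defining the single scalar $\tilde g(X,A)=M\cdot\ind\{A>c\}+g(X)$ so that $F_+(\cdot)$, $F_-(\cdot)$, $F(\cdot)$ are all slices of one 1D empirical CDF, to which DKW applies directly; your approach (DKW applied separately to the pushforwards of several monotone classes plus a union bound) is equivalent in spirit and only affects constants. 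You are also right that the statement's $F_+$ in the numerator of the $\tau_\pm$ definitions should be $F_-$, since the nonconformity count tracks $\{A_i\le c\}$; this typo propagates from the paper's own proof (which writes $\hat F_n^+$ where it clearly means $\hat F_n^-$).

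On your flagged ``main obstacle'': it is not actually delicate. Since $\hat\tau$ is an \emph{infimum} over the sub-level set of $\widehat{\mathrm{FDR}}$, the pointwise bounds $\mathrm{LB}(t)\le\widehat{\mathrm{FDR}}(t)\le\mathrm{UB}(t)$ on the good event give the set inclusions $\{t\colon\mathrm{UB}(t)\le\alpha\}\subseteq\{t\colon\widehat{\mathrm{FDR}}(t)\le\alpha\}\subseteq\{t\colon\mathrm{LB}(t)\le\alpha\}$, and taking infima of nested sets immediately yields $\tau_-\le\hat\tau\le\tau_+$. No monotonicity of $\widehat{\mathrm{FDR}}(t)$, no argument about where its first downcrossing occurs, is needed. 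You need monotonicity only of $\hat F_m^+(\cdot)$ (trivially nonincreasing) when converting the threshold sandwich into a power sandwich in the last step, which you already invoke.
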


\begin{proof}[Proof of Proposition~\ref{prop:finite_power}]
    Fix any constant $\delta\in (0,1)$. Consider the function $\tilde{g}(X,A) = M\cdot \ind\{A>c\} + g(X)$, where $M> 2\sup_{x\in \cX}|g(x)|<\infty$ is a sufficiently large constant such that $\inf_{x\in\cX} \tilde{g}(x,c+1) > \sup_{x\in \cX} \tilde{g}(x,c)$. 
    Therefore, we know that for any $|t|< M/2$, it holds that 
    \$
    F_+(t) = \PP(   \tilde{g}(X,A) \geq M+t),\quad 
    F_-(t) = \PP( t\leq \tilde{g}(X,A) < M/2).
    \$
    And similar relations hold for the empirical versions of these quantities. Applying the Dvoretzky–Kiefer–Wolfowitz (DKW) inequality~\cite{dvoretzky1956asymptotic,massart1990tight} with a union bound for the empirical c.d.f.~of $\tilde{g}(X,A)$ for both calibration and test data, we know that with probability at least $1-\delta$, 
    \$
    &\sup_t \big| \hat{F}_m^+(t) - F_+(t) \big| \leq \epsilon_{m} ,\quad \sup_t \big| \hat{F}_m^-(t) - F_-(t) \big| \leq \epsilon_{m} ,\quad \sup_t \big| \hat{F}_m (t) - F (t) \big| \leq \epsilon_{m}  ,  \\
    &\sup_t \big| \hat{F}_n^+(t) - F_+(t) \big| \leq \epsilon_{n} ,\quad \sup_t \big| \hat{F}_n^-(t) - F_-(t) \big| \leq \epsilon_{n} , \quad  \textrm{and}\quad \big| \hat{m}^+ - \PP(A>c) \big| \leq \epsilon_m , 
    \$
    where recall that $\epsilon_m = \sqrt{\log(8/\delta)/(2m)}$ and $\epsilon_n =  \sqrt{\log(8/\delta)/(2n)}$. 
    We call the event that all above holds the ``good event'' $E^*$, which happens with probability at least $1-\delta$. 
    
    By definition, we can show that the selection set of Conformal Alignment is equivalently $\cS = \{j\in [m]\colon g(X_{n+j})\geq \hat\tau\}$, where $\hat\tau = \inf \big\{ t\colon \hat{\text{FDR}}(t) \leq \alpha \big\}$, and 
    \$
     \hat{\text{FDR}}(t) = \frac{m}{n+1}\cdot \frac{1+ \sum_{i=1}^n \ind\{g(X_i)\geq t, A_i\leq c\}}{\sum_{j=1}^m \ind\{g(X_{n+j})\geq t\}} 
     =  \frac{1}{n+1}\cdot \frac{1+ n\cdot \hat{F}_n^+(t)}{\hat{F}_m(t)}.
    \$
    By the DKW inequality above,  with probability at least $1-\delta$, 
    \$
        \frac{1}{n+1}\cdot \frac{1+ n F_+(t) - n\epsilon_n }{F_m(t) + \epsilon_m}
        \leq \hat{\text{FDR}}(t) \leq  \frac{1}{n+1}\cdot \frac{1+ n F_+(t) + n\epsilon_n }{F_m(t) - \epsilon_m}
    \$
    holds simultaneously for all $t\in \RR$. On the other hand, the (realized) power is 
    \$
    \text{Pow} :=  \frac{\sum_{j=1}^m \ind\{ g(X_{n+j})\geq \hat\tau, A_{n+j}>c\}}{\sum_{j=1}^m \ind\{  A_{n+j}>c\}} 
    = \frac{\hat{F}_m^+(\hat\tau)}{m^+}.
    \$
    On the good event $E^*$, we have 
    \$
   \frac{F_+(\hat\tau) - \epsilon_m}{\PP(A>c) + \epsilon_m}\leq  \text{Pow} \leq \frac{F_+(\hat\tau) + \epsilon_m}{\PP(A>c) - \epsilon_m}.
    \$
    By the definition of $\hat\tau$, we know that $\tau_-\leq \hat\tau \leq \tau_+$ on the good event $E^*$.
    The monotonicity of realized power as a function of selection cutoff $\hat\tau$ thus implies that 
    \$
    &\text{Pow} \geq  \frac{\hat{F}_m^+(\tau_-)}{m^+} 
    \geq \frac{F_+(\tau_-)-\epsilon_m}{\PP(A>c) + \epsilon_m}, \\
    &\text{Pow} \leq  \frac{\hat{F}_m^+(\tau_+)}{m^+} 
    \leq \frac{F_+(\tau_+)+\epsilon_m}{\PP(A>c) - \epsilon_m}.
    \$
\end{proof} 

\section{Additional comparison with prior works}

\subsection{Comparison with selective prediction}\label{sec:sp_comparison}
For selective prediction, El-Yaniv et al.~\cite{el2010foundations,geifman2017selective} 
suggest considering the {\em risk} measure, defined (in our context) 
as:
\$ 
R:= \frac{\EE\big[\sum_{j \in [m]} \ind\{A_{n+j}\le c, 
j\in \cS\}\big]}
{\EE\big[\sum_{j\in[m]} \ind\{j \in \cS\}\big]}.
\$
This measure is known as the marginal FDR (mFDR) in the 
multiple testing literature and there has been an exhaustive 
discussion of its comparison with FDR. For example, Benjamini and 
Hochberg~\cite{benjamini1995controlling} proposed FDR in favor of mFDR, 
since the latter is impossible to control in the global null case;
Storey~\cite{storey2003positive} also pointed out that mFDR 
cannot be used for controlling the joint behavior of 
the numerator (number of false selections) and the denominator (number 
of selections).
On the other hand, to control the 
risk/mFDR,~\cite{geifman2017selective} proposes a 
method that searches for the cutoff over a grid of values, 
and then takes a union bound to achieve the overall error control, which might be conservative. 
In contrast, our method for FDR control does not involve 
taking union bounds, delivering sharp finite-sample control guarantees.
 
\subsection{Comparison with conformalized selection}\label{sec:cs_comparison}
Our work instantiates conformalized selection~\cite{jin2023selection} for the reliable deployment of foundation model outputs.
We detail our contributions as follows. First, we justify that selecting reliable outputs with 
FDR control is a practically reasonable criterion for downstream tasks. 
Second, we formulate the alignment problem as a selective
prediction problem that can be tackled with conformal selection, adapting conformal selection to the current setting. Note that while conformal selection is an established method, 
its current application in job hiring and drug discovery requires numerical outputs and predictions; 
the task of aligning foundation model outputs, however, does not come with a clear numerical output. 
To make conformal selection applicable to our purpose, we formulate
the alignment status as a numerical indicator that could be inferred with reference information, and then decide how to train a prediction model for the alignment score, which can therefore be used in conjunction with the conformal selection framework. Our third contribution is the implementation of the pipeline. 
In this regard, we conduct extensive numerical experiments, finding that using a lightweight prediction model such as logistic regression or random forests with popular heuristic (uncalibrated) uncertainty measures is often sufficient for identifying reliable outputs. This provides a concrete and easy-to-use pipeline for practical use.

\section{Additional implementation details}\label{sec:implem}
This section contains implementation details for Section~\ref{sec:qa} and \ref{sec:xray}.

\subsection{Question answering}\label{sec:app-qa} 
As is mentioned in Section~\ref{sec:qa}, we adopt a subset of sample size 
$8000$ from the TriviaQA dataset \cite{joshi2017triviaqa}\footnote{\url{https://nlp.cs.washington.edu/triviaqa/}} 
and a subset of sample size $7700$ from the CoQA dataset \cite{reddy2019coqa}\footnote{\url{https://stanfordnlp.github.io/coqa/}}.
For both TriviaQA and CoQA datasets, we follow the exact text preparation and prompt engineering procedure in \cite{lin2023generating}, which we show below. To evaluate the correctness of generated outputs, we adopt the \texttt{rouge-L} score \cite{lin2004rouge} that focuses on the longest common subsequence between two sequences. Following the practice in \cite{kuhn2023semantic,lin2023generating}, a generated answer is defined to be admissible when the \texttt{rouge-L} score is no less than $0.3$.

\paragraph{TriviaQA.} The TriviaQA dataset can be loaded from the Python module \texttt{datasets} and we use the \texttt{rc.nocontext} split. Prompts in use are the same as those in~\cite{lin2023generating,touvron2023llama} and take the following format.
\begin{tcolorbox}
\texttt{Answer these questions:\\
Q: In Scotland a bothy/bothie is a?\\
A: House\\
Q: [sample question]\\
A:}
\end{tcolorbox}

\paragraph{CoQA.} The CoQA dataset is downloaded from \url{https://downloads.cs.stanford.edu/nlp/data/coqa/coqa-dev-v1.0.json} and the prompts are the same as those in  \cite{lin2023generating,kuhn2023semantic}.
\begin{tcolorbox}
\texttt{[The provided context paragraph]\newline
[additional question-answer pairs]\newline
Q: [question]. A:}
\end{tcolorbox}

Answers are generated using the default configurations similar to \cite{lin2023generating}; in specific, we use \texttt{num\_beams=1}, \texttt{do\_sample=True}, \texttt{top\_p=1.0}, \texttt{top\_k=0}, \texttt{temperature=1.0}. For each question, we generate $M=20$ answers independently, with which we treat the first output as $f(X_i)$ and utilize the $M$ outputs to calculate the similarity-base scores.

Given the generated outputs, we adopt the pipeline in \cite{lin2023generating} to calculate the uncertainty and confidence scores. The self-evaluation score (\texttt{Self\_Eval}), or \texttt{P(True)}, is obtained using the same prompt in \cite{kadavath2022language} as follows:

\begin{tcolorbox}
\texttt{[story]\\
Question: [question]\\
Here are some brainstormed ideas: [few\_shots examples]\\
Possible Answer: [generated answer]
Is the possible answer:\\
(A) True\\
(B) False\\
The possible answer is: (
}    
\end{tcolorbox}
The self-evaluation score is then defined as the output probability associated with the generated answer ``\texttt{A)}''.

\paragraph{Language models.}
We use two language models without fine-tuning for comparison in terms of the accuracy of the generated answers: OPT-13B and LLaMA-2-13B-chat. The implemented OPT-13B model is from Hugging Face \url{https://huggingface.co/facebook/opt-13b} and the implemented LLaMA-2-13B-chat is from \url{https://llama.meta.com}. We utilize an off-the-shelf DeBERTa-large model \cite{he2020deberta}\footnote{\url{https://github.com/microsoft/DeBERTa}} as the NLI classifier to calculate similarities.

\subsection{CXR report generation}\label{sec:app-cxr}

We use the subfolders \texttt{p10,p11,p12} from the MIMIC-CXR dataset \cite{johnson2019mimic} (these are all the data we use, including fine-tuning and subsequent experiments for Conformal Alignment) to limit the resources required for fine-tuning. The MIMIC-CXR dataset can be accessed from the PhysioNet project page \url{https://physionet.org/content/mimic-cxr/2.0.0/} under the PhysioNet Credentialed Health Data License 1.5.0.

\allowdisplaybreaks
In this application, we fine-tune a vision-language model, with the Vision Transformer \texttt{google/vit\-base-patch16-224-in21k} pre-trained on ImageNet-21k\footnote{\url{https://huggingface.co/google/vit-base-patch16-224-in21k}} as the image encoder and GPT2 as the text decoder. We largely follow the training procedure in \cite{quach2023conformal}. 
In particular, each raw image is resized to $224\times 224$ pixels. 
We then fine-tune the model on a hold-out dataset with a sample size of $43,300$ for $10$ epochs with a batch size of $8$, and other hyperparameters are set to default values. 
The training process takes about 10 hours on one NVIDIA A100 GPU. 

Once fine-tuned, the model is treated as fixed for generating reports. 
The data not used in fine-tuning are used as the training, calibration, and test data in our experiments. 
For report generation, we use the default configurations  (\texttt{max\_length=512, do\_sample=True}) and the input of the vision-language model consists of pixel values of images (resized to $224\times 224$ pixels) and tokenized prompts (tokenized via the same GPT2 model). 
For each image, we generate $M=10$ reports independently, with which we treat the first output as $f(X_i)$ and utilize the $M$ outputs to calculate the features (similarity-based scores) for training the alignment predictor.
To determine if a generated report is admissible, we follow the practice in \cite{quach2023conformal} and use the CheXbert model \cite{smit2004chexbert} to convert both the generated and reference reports into $14$-dimensional vectors. For each coordinate, the entry falls into the categories ``Blank'', ``Positive'', ``Negative'' or ``Uncertain'', with which we further convert them to binary vectors with each entry indicating ``Positive'' or not. The alignment score $A_i\in\{0,1\}$ is defined to be the indicator of whether there are at least $12$ matched coordinates between the two binary vectors.
Similar to the QA task, we modify the pipeline in \cite{lin2023generating} to calculate the uncertainty and confidence scores, in which we utilize an off-the-shelf DeBERTa-large model \cite{he2020deberta} as the NLI classifier to calculate similarities.


\section{Additional experimental results for question answering}
\label{sec:qa_exp}


\subsection{FDR/Power for TriviaQA dataset with alternative classifiers}
\label{subsec:qa_fdr_trivia}

We present the FDR/power plots when applying Conformal Alignment to the TriviaQA dataset and the 
alignment predictor is trained with random forests (Figure~\ref{fig:fdr-triviaqa-rf}) and XGBoost (Figure~\ref{fig:fdr-triviaqa-xgbrf}), fixing $\gamma_1=0.2$ and $\gamma_2=0.5$. 
These figures are similar to  Figure~\ref{fig:fdr-triviaqa} (which uses logistic regression to train the alignment predictor) in the main text. We observe similar messages as those in Section~\ref{sec:qa}: the FDR is always tightly controlled with satisfactory power. 
Regarding the choice of $|\cD|$, we similarly see that $500$ high-quality samples suffice for the powerful deployment of our method. 

\begin{figure}[H]
    \centering
    \includegraphics[width=\textwidth]{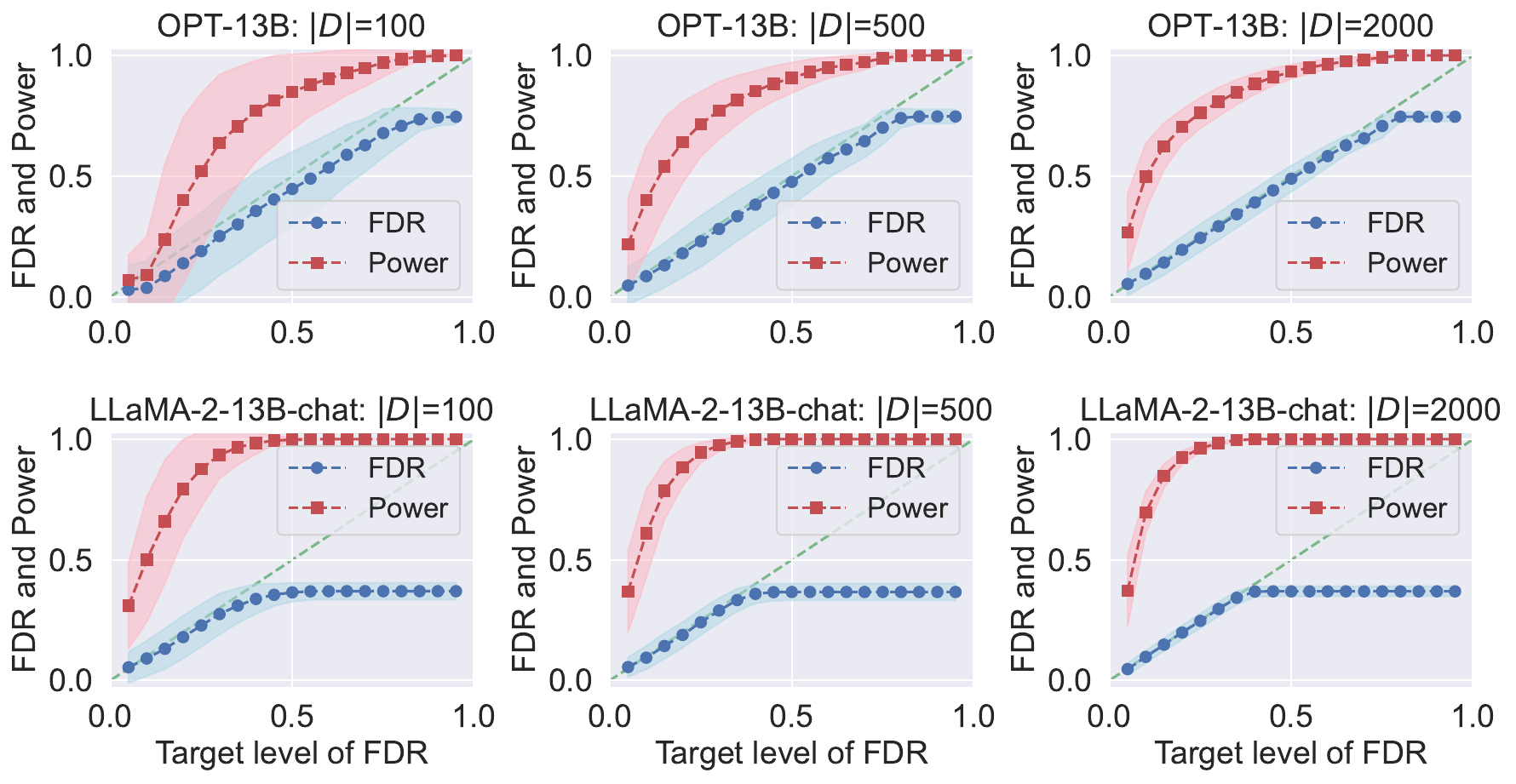}
    \caption{FDR/power plots for conformal alignment with TriviaQA dataset and random forests as the base classifier.  Details are otherwise the same as Figure~\ref{fig:fdr-triviaqa}.}
    \label{fig:fdr-triviaqa-rf}
\end{figure}

\begin{figure}[H]
    \centering
    \includegraphics[width=\textwidth]{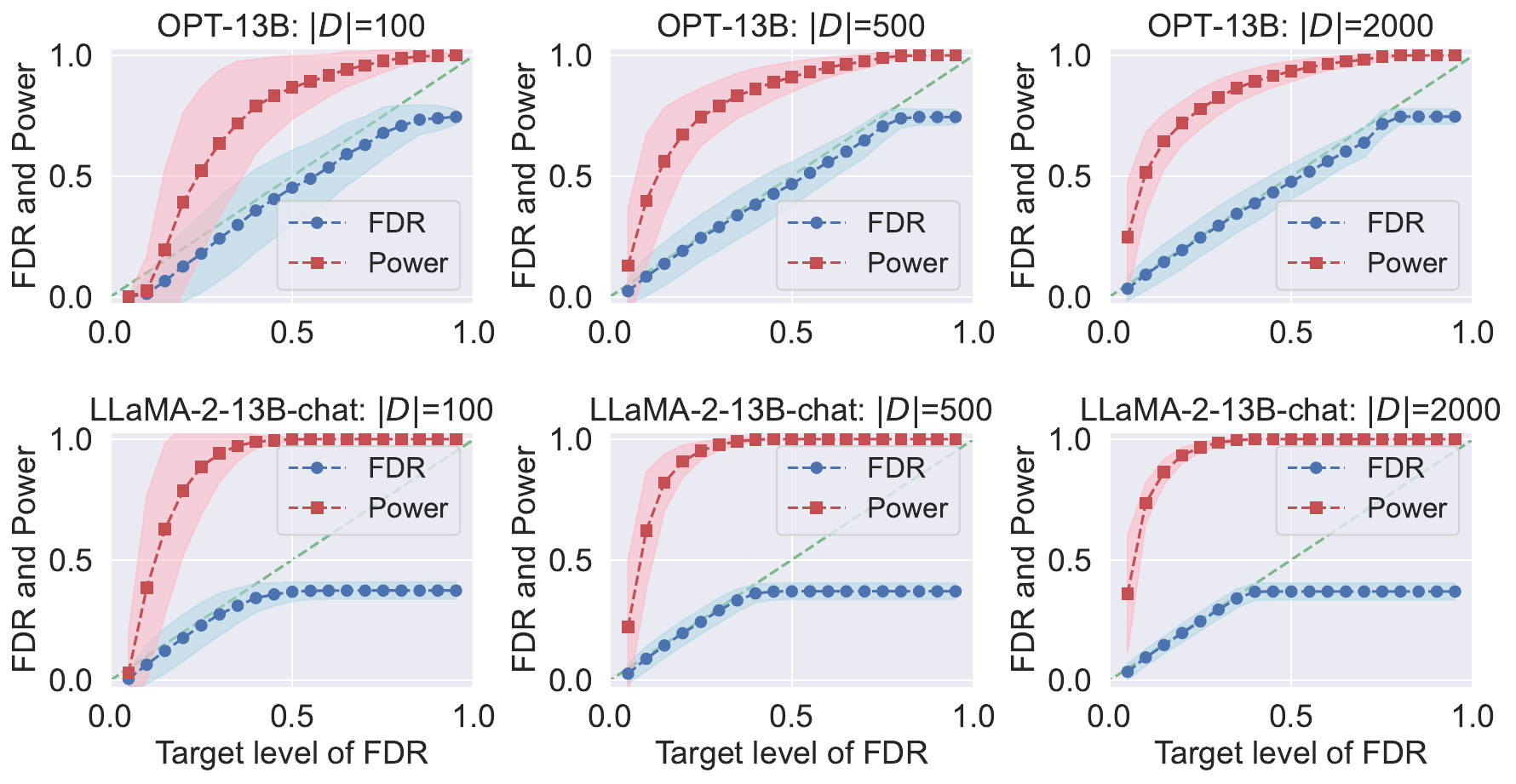}
    \caption{FDR/power plots for conformal alignment with TriviaQA dataset and XGBRF as the base classifier.  Details are otherwise the same as Figure~\ref{fig:fdr-triviaqa}.}
    \label{fig:fdr-triviaqa-xgbrf}
\end{figure}

\subsection{FDR/Power on CoQA dataset}
\label{app:subsec_coqa}

In this part, we present a set of results parallel to the main text and Section~\ref{subsec:qa_fdr_trivia} when we apply Conformal Alignment to the CoQA dataset; details about the dataset are in Appendix~\ref{sec:app-qa}.

We present the FDR/power plots for the CoQA dataset when the alignment predictor is trained with logistic regression (Figure~\ref{fig:fdr-opt-coqa}), 
random forest (Figure~\ref{fig:fdr-opt-coqa-rf}), 
and XGBoost (Figure~\ref{fig:fdr-opt-coqa-xgbrf}), fixing $\gamma_1=0.2$ and $\gamma_2=0.5$.  
We again observe similar messages as those in Section~\ref{sec:qa}: the FDR remains tightly controlled, and a sample size of $|\cD|=500$ suffices for the powerful deployment of our method. 

We also recall the baseline, where one asks the language model to evaluate its confidence (i.e., the \texttt{Self\_Eval} feature above), and threshold the obtained likelihood $\{ q^{\rm self}_j \}$ with a cutoff of $1-\alpha$ as if it were the ``probability'' of alignment, i.e., 
$\cS_{\rm baseline} = \{j \in [m]: q^{\rm self}_j \geq 1-\alpha\}$. In Figure~\ref{fig:fdr-coqa-base}, we present the results with the CoQA dataset, in which the baseline fails to control FDR for any given $\alpha$ and also exhibits lower power than our proposed approach.

\begin{figure}[H]
    \centering
    \includegraphics[width=\textwidth]{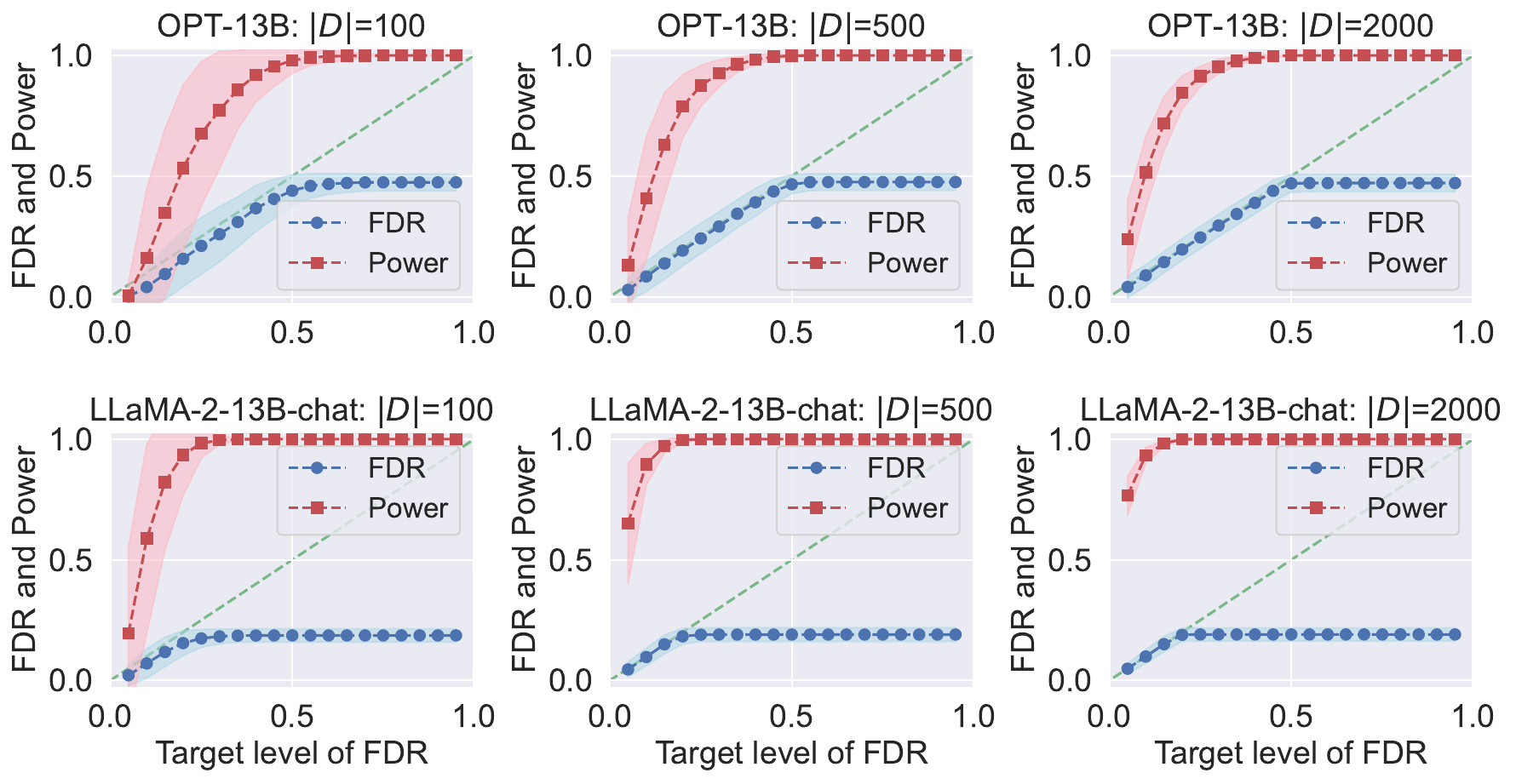}
    \caption{FDR/power plots for conformal alignment with CoQA dataset and logistic regression for training the alignment predictor. Details are otherwise the same as Figure~\ref{fig:fdr-triviaqa}.}
    \label{fig:fdr-opt-coqa}
\end{figure}

\begin{figure}[H]
    \centering
    \includegraphics[width=\textwidth]{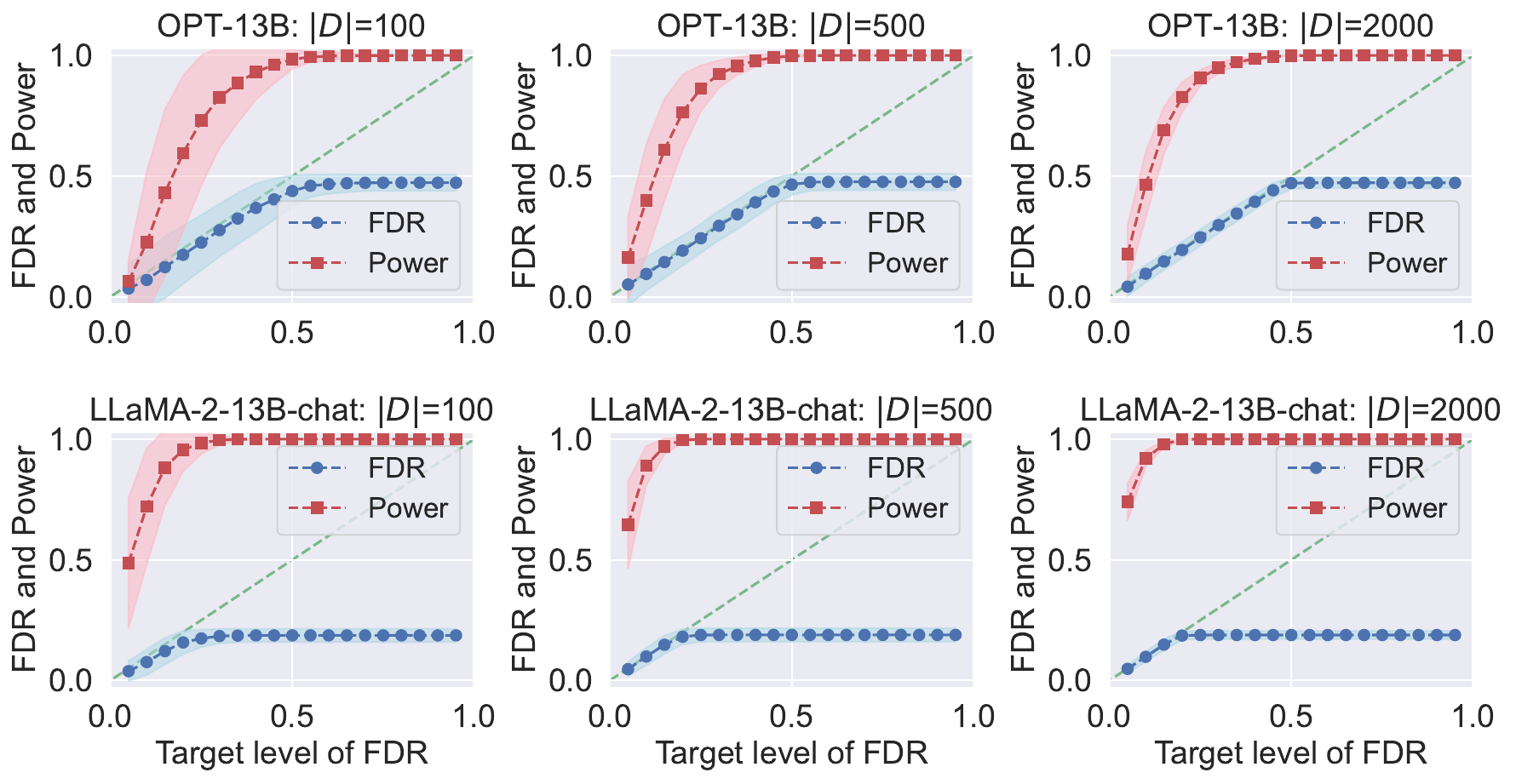}
    \caption{FDR/power plots for conformal alignment with CoQA dataset and random forests as the base classifier.  Details are otherwise the same as Figure~\ref{fig:fdr-triviaqa}.}
    \label{fig:fdr-opt-coqa-rf}
\end{figure}

\begin{figure}[H]
    \centering
    \includegraphics[width=\textwidth]{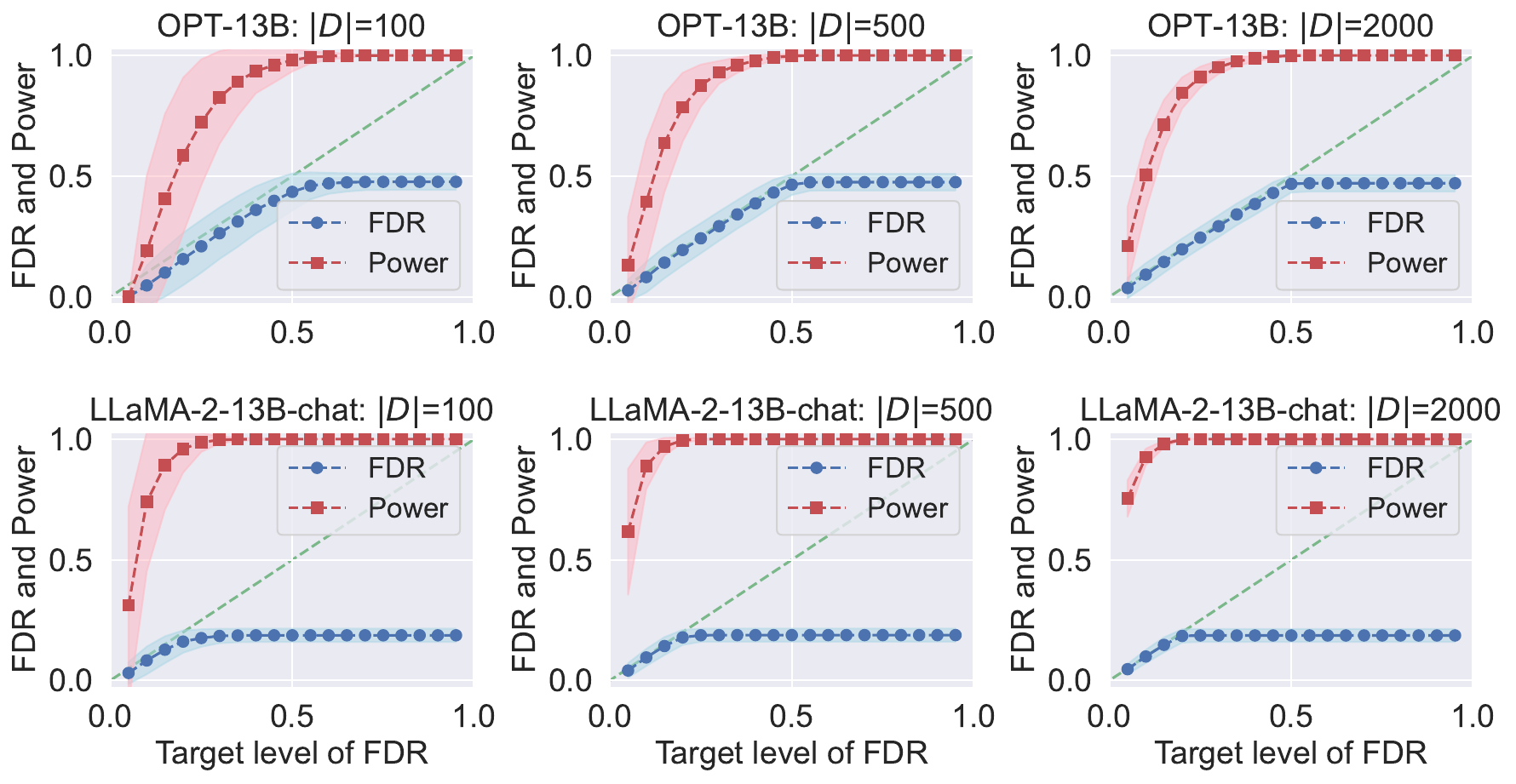}
    \caption{FDR/power plots for conformal alignment with CoQA dataset and XGBRF as the base classifier.  Details are otherwise the same as Figure~\ref{fig:fdr-triviaqa}.}
    \label{fig:fdr-opt-coqa-xgbrf}
\end{figure}

\begin{figure}[H]
    \centering
    \includegraphics[width=\textwidth]{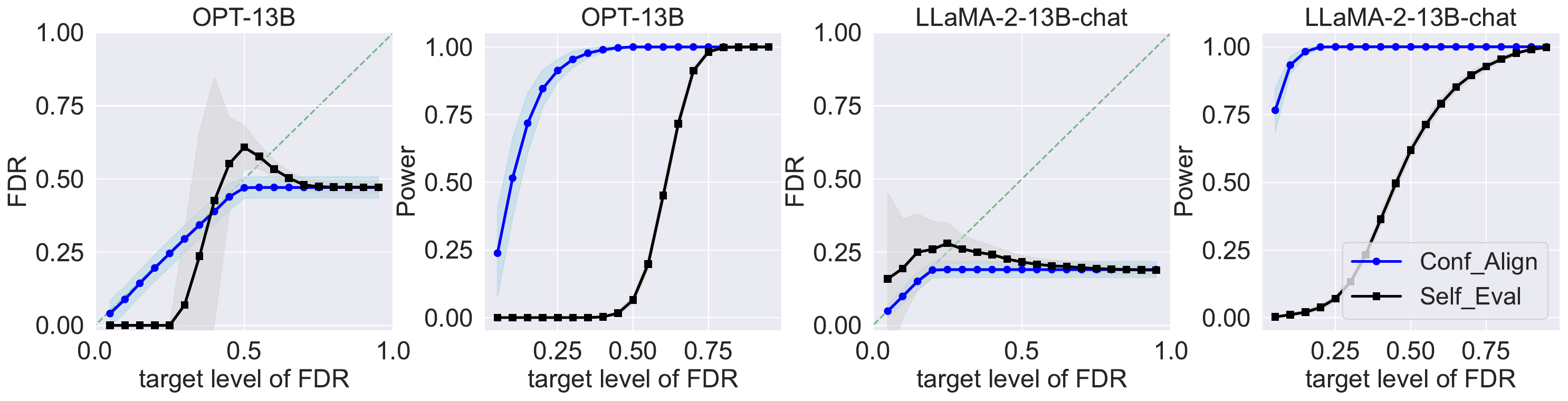}
    \caption{Comparison of Conformal Alignment at FDR level $\alpha$ versus the heuristic baseline applied to CoQA dataset. Details are otherwise the same as Figure~\ref{fig:fdr-triviaqa-roc}.}
    \label{fig:fdr-coqa-base}
\end{figure}

\subsection{FDR for individual features}
\label{app:subsec_fdr_ind_feature}

Figure~\ref{fig:fdr-triviaqa-roc-fdr} presents the realized FDR at each FDR target level 
for the TriviaQA dataset when the alignment predictor is a logistic regression over each individual feature. 
 
\begin{figure}[H]
    \centering
    \includegraphics[width=0.8\textwidth]{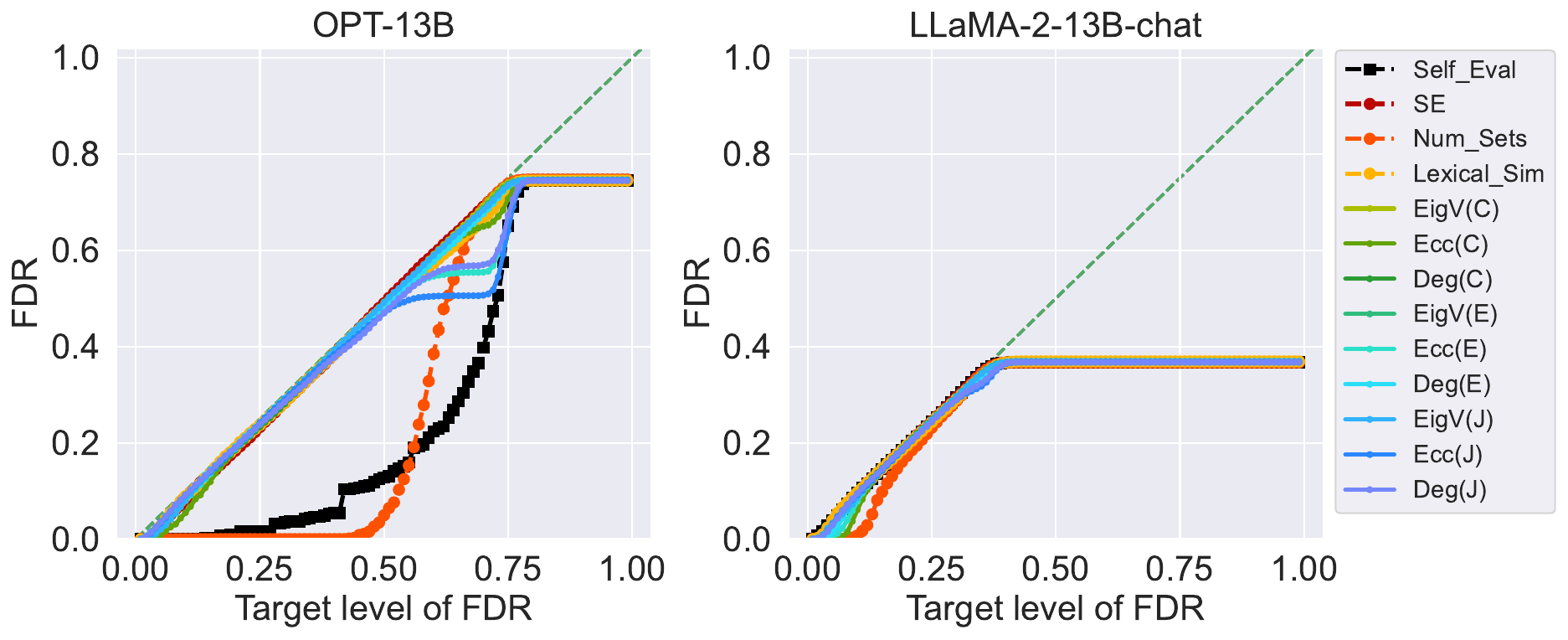}
    \caption{The realized FDR at each FDR target level for TriviaQA dataset when the alignment predictor is trained with logistic regression over each individual feature.}
    \label{fig:fdr-triviaqa-roc-fdr}
\end{figure}

From Figure~\ref{fig:fdr-triviaqa-roc-fdr}, we note that FDR is always controlled for each feature. However, FDR with more powerful LLM (LLaMA-2-13B-chat) is tightly controlled while \texttt{Self\_Eval} and \texttt{Num\_Sets} with OPT-13B  produce FDR and power both close to zero when $\alpha$ is small as their predictive power is too low. We omit similar results when conducting the same experiment on the CoQA dataset.

\subsection{Ablation studies}
\label{app:subsec_ablation_qa}

In this part, we present and discuss ablation studies for choosing the data splitting schemes $\gamma_1,\gamma_2$ through experiments on the TriviaQA dataset, with the alignment predictor trained via logistic regression over all the features in the main text.

\paragraph{Varying $\gamma_1$ (size of the tuning set).}
Recall that we use a random subset of size $  \lfloor \gamma_1 |\cD| \rfloor$ for tuning hyperparameters \cite{lin2023generating} in the features,  and a subset of size $\lfloor \gamma_2 |\cD| \rfloor$  for training the alignment predictor. 
We start by studying the effect of $\gamma_1$ on the performance with fixed $\gamma_2$. This means the sample size for predictor training is fixed, but there is a tradeoff between the sample sizes used for hyperparameter tuning and the calibration step. 
Here, logistic regression is used to train the alignment predictor 
and the total number of ``high-quality'' data is fixed at $|\cD|=2000$.

We show the results when we vary $\gamma_1$ in $\{0.2,0.4,0.6\}$ for the TriviaQA dataset
(Figure~\ref{fig:fdr-opt-triviaqa-gamma1-1000}) 
and the CoQA dataset (Figure~\ref{fig:fdr-opt-coqa-gamma1-1000}).

\begin{figure}[H]
    \centering
    \includegraphics[width=\textwidth]{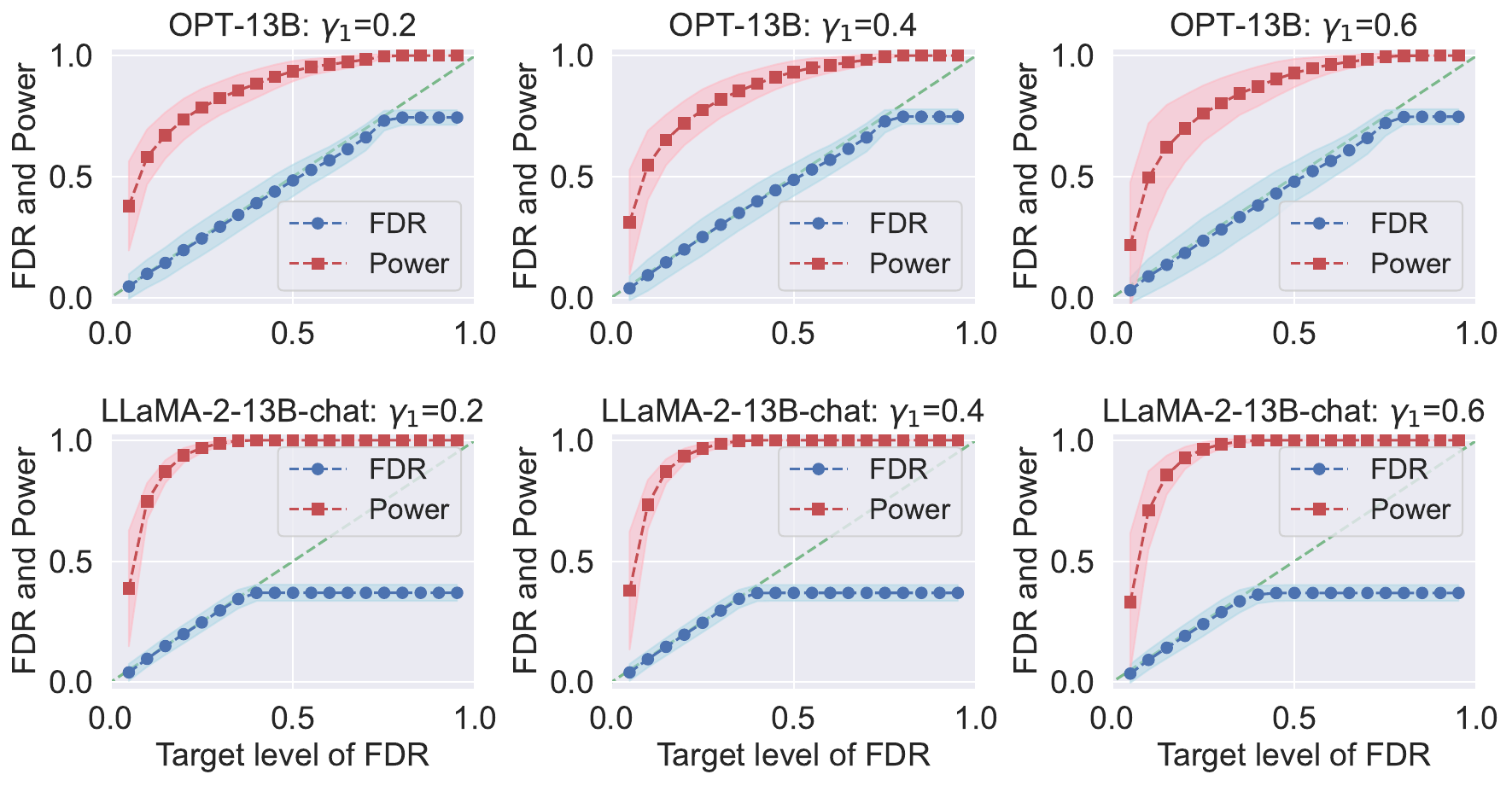}
    \caption{Ablation study for the choice of $\gamma_1$: FDR/power plots for conformal alignment with TriviaQA dataset. We fix $|\cD|=2000$, $\gamma_2=0.3$, and use logistic regression as the base classifier.}
    \label{fig:fdr-opt-triviaqa-gamma1-1000}
\end{figure}

\begin{figure}[H]
    \centering
    \includegraphics[width=\textwidth]{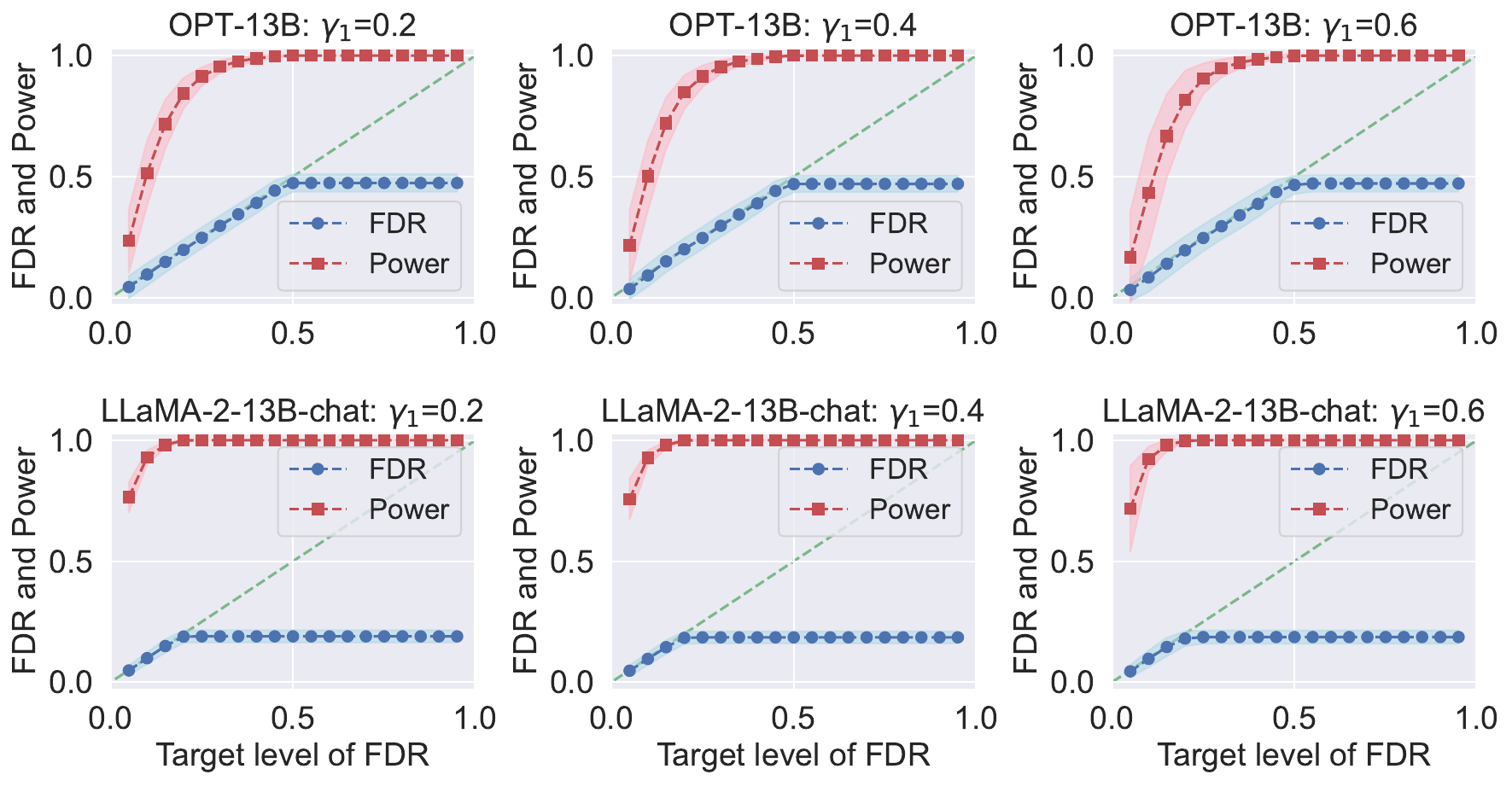}
    \caption{Ablation study for the choice of $\gamma_1$: FDR/power plots for conformal alignment with TriviaQA dataset: $|\cD|=2000$, $\gamma_2=0.3$, logistic regression as the base classifier.}
    \label{fig:fdr-opt-coqa-gamma1-1000}
\end{figure}

In Figure~\ref{fig:fdr-opt-coqa-gamma1-1000}, 
we can see that with fixed $|\cD|$ and $\gamma_2$, when $\gamma_1$ increases, the power slightly decreases due to the decreasing sample size for training the alignment score predictor and BH procedure. Besides, we note that $\gamma_1=0.2$ is sufficient for tuning the parameters to calculate the features.

\paragraph{Varying $\gamma_2$ (size of the predictor training set).}
We proceed to study the choice of $\gamma_2$, the proportion of reference data used for training the classifier. We vary the value of $\gamma_2$ while fixing the size of the reference set $|\cD|=2000$ and $\gamma_1=0.2$ per our recommendation from the last part. Note that with $|\cD|$ and $\gamma_1$ fixed, varying $\gamma_2$ also reveals the effect of $|\cD_{\calib}|$. We use logistic regression as the base classifier. 
The results are shown for TriviaQA dataset in Figure~\ref{fig:fdr-opt-triviaqa-gamma2-1000} and 
CoQA dataset in Figure~\ref{fig:fdr-opt-coqa-gamma2-1000}. 

At values $\gamma_2=0.1$ and $\gamma_2=0.7$, we observe a slight power loss compared with the middle two columns. The reason is that when $\gamma_2$ is close to either $0$ or $1$, there will be unbalanced sample sizes between $\cD^{(2)}_{\tr}$ and $\cD_{\calib}$, and the scarce of sample in each set will affect the performance, i.e. causing larger variance in FDR/power or smaller power. In light of this, we suggest the choice of $\gamma_2 \in [0.3, 0.5]$ to balance $\cD^{(2)}_{\tr}$ and $\cD_{\calib}$.

\begin{figure}[!h]
    \centering
    \includegraphics[width=\textwidth]{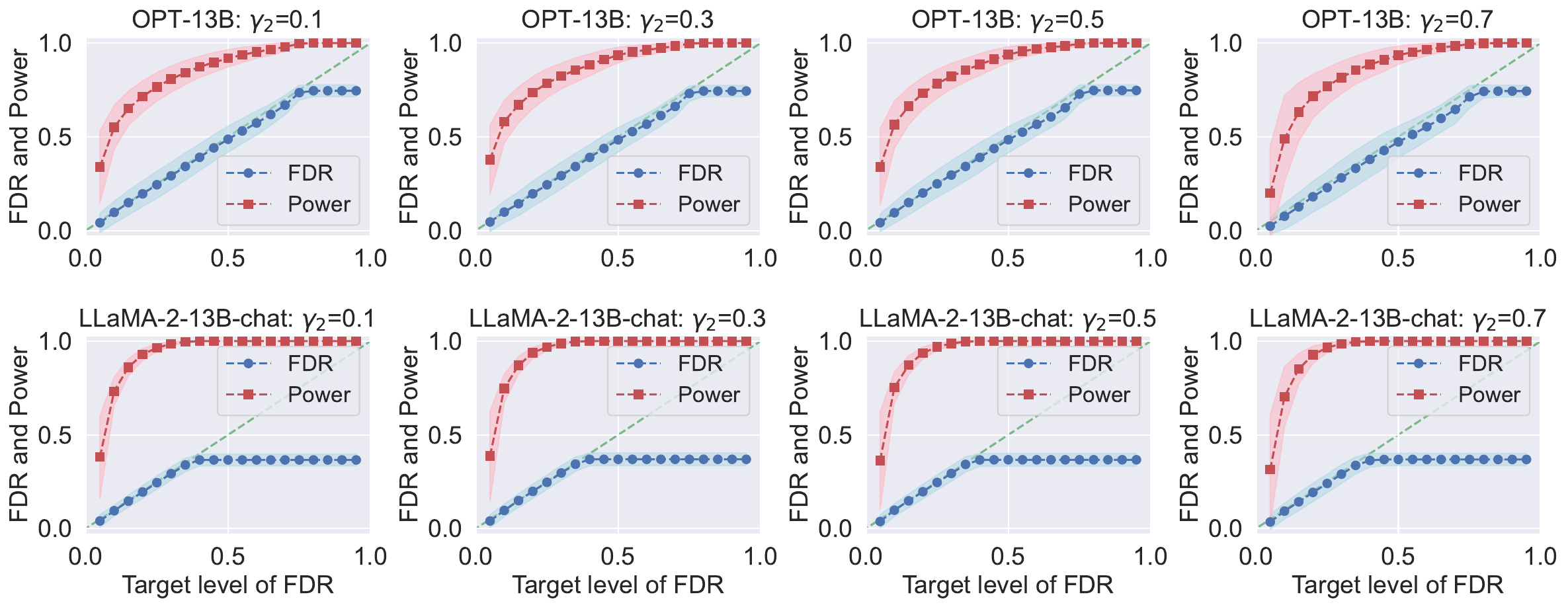}
    \caption{Ablation study for the choice of $\gamma_2$: FDR/power plots for conformal alignment with TriviaQA dataset, fixing $|\cD|=2000$, $\gamma_1=0.2$, and using logistic regression as the base classifier.}
    \label{fig:fdr-opt-triviaqa-gamma2-1000}
\end{figure}

\begin{figure}[!h]
    \centering
    \includegraphics[width=\textwidth]{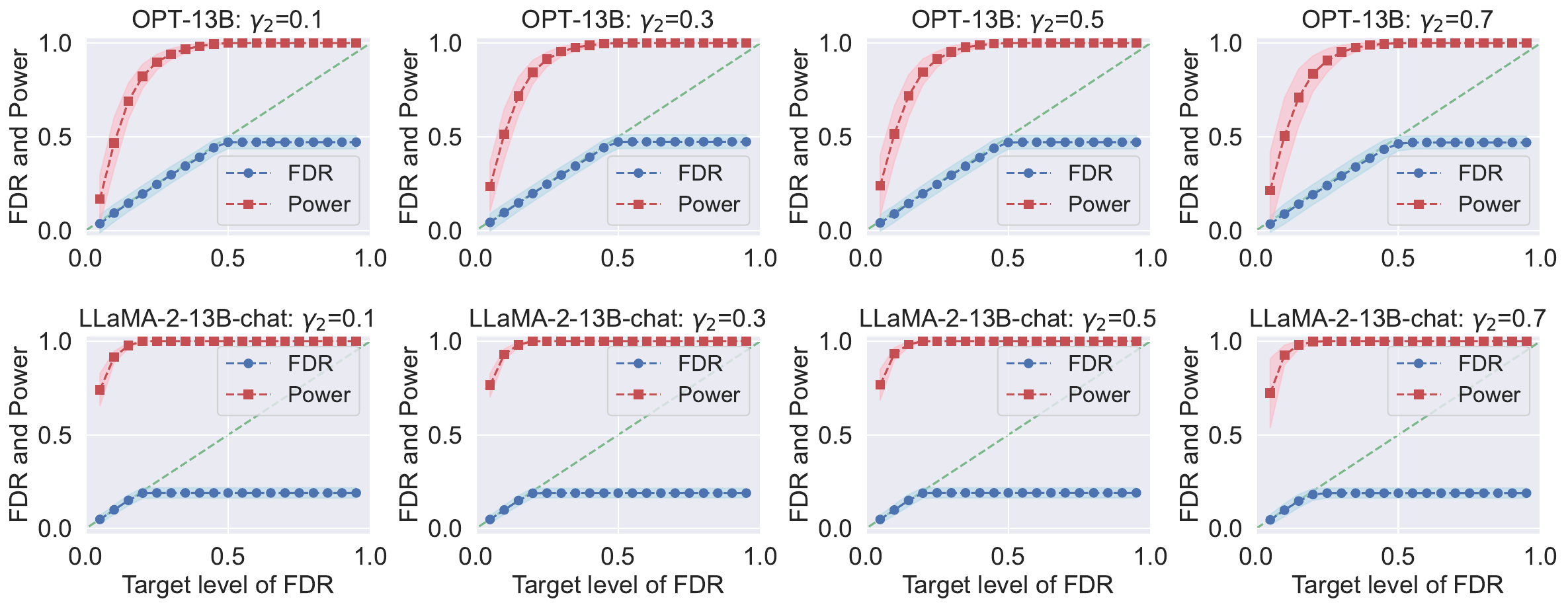}
    \caption{Ablation study for the choice of $\gamma_2$: FDR/power plots for conformal alignment with CoQA dataset, fixing $|\cD|=2000$, $\gamma_1=0.2$, and using logistic regression as the base classifier.}
    \label{fig:fdr-opt-coqa-gamma2-1000}
\end{figure}



\subsection{Real examples in QA}
\label{app:subsec_qa_example}

\begin{table}[ht]
\centering
\begin{center}
\begin{tabular}{ p{5cm} ccc}
\toprule
Question & Reference answer & Generated answer & Alignment scores \\ 
\cmidrule(r){1-1}\cmidrule(r){2-2}\cmidrule(r){3-3}\cmidrule(r){4-4}
\texttt{Which company produced the Hastings and Herald aircraft?}
& 
\texttt{Handley-Page}
& 
\texttt{de Havilland}
& \makecell{$A_i=0$\\$\hat A_i = 0.083$}
\\ 
\midrule
\texttt{How many ‘E’ tiles are provided in a Scrabble game?}
& 
\texttt{12}
& 
\texttt{2}
& \makecell{$A_i=0$\\$\hat A_i = 0.190$}
\\ 
\midrule
\texttt{In the 1972 film Cabaret, Sally Bowles is working in which club?}
&
\texttt{KitKat}
&
\texttt{Kit Kat Klub}
&
\makecell{$A_i = 0$\\$\hat A_i = 0.505$}
\\
\midrule
\texttt{An orrery, popular in the 13th and 19th centuries, was a model of what?}
&
\texttt{The Solar System}
&
\texttt{Solar system}
&
\makecell{$A_i = 1$\\$\hat A_i = 0.814$}
\\
\midrule
\texttt{Of which band was Feargal Sharkey the lead singer until 1983?}
&
\texttt{THE UNDERTONES}
&
\texttt{The Undertones}
&
\makecell{$A_i = 1$\\$\hat A_i = 0.938$}
\\
\bottomrule
\end{tabular}
\caption{Illustration on TriviaQA dataset with LLaMA-2-13B-chat: selection threshold $\hat \tau=0.388$.}
\label{tab:qa-eg}
\end{center}
\end{table}

\subsection{Empirical evaluation of feature importance}
\label{app:subsec_feature_importance}
 
In Figure~\ref{fig:fdr-triviaqa-roc} and \ref{fig:fdr-cxr-roc}, we quantify the importance of individual features in the alignment predictor via the ROC curve. In this section, we use the Shapley value as well as the model-based score to evaluate the individual feature importance more directly.

We used the XGBoost functionality to compute the importance scores in Figure~\ref{fig:plot_xgbrf}. We also computed the Shapley value of individual features to measure their importance based on the prediction models we use in the manuscript. As the Shapley values are model-agnostic, we present results for three choices of $g(X)$ with different base classifiers, including logistic regression, random forests, and XGBoostRF, in Figure~\ref{fig:plot_shap1}, \ref{fig:plot_shap2}, and \ref{fig:plot_shap3}. Although the exact order of features varies in four plots, there are features that have consistent dominating effects in all figures, e.g. \texttt{Deg(J)} and \texttt{EigV(J)} based on the Jaccard similarity. In addition, NumSets as the feature in alignment score predictors tends to exhibit an effect close to zero in all four figures. We should note that the aforementioned findings are also revealed in Figure~\ref{fig:fdr-triviaqa-roc} in the paper, where predicted scores with \texttt{EigV(J)}, \texttt{Deg(J)}, and \texttt{Ecc(J)} have higher power and that with \texttt{NumSets} is much less powerful. We thank the reviewer again for raising this question for more comparison of feature importance in the alignment score predictor.

\begin{figure*}[t]
    \centering
    \begin{subfigure}{0.49\textwidth}   
    \includegraphics[height=0.27\textheight]{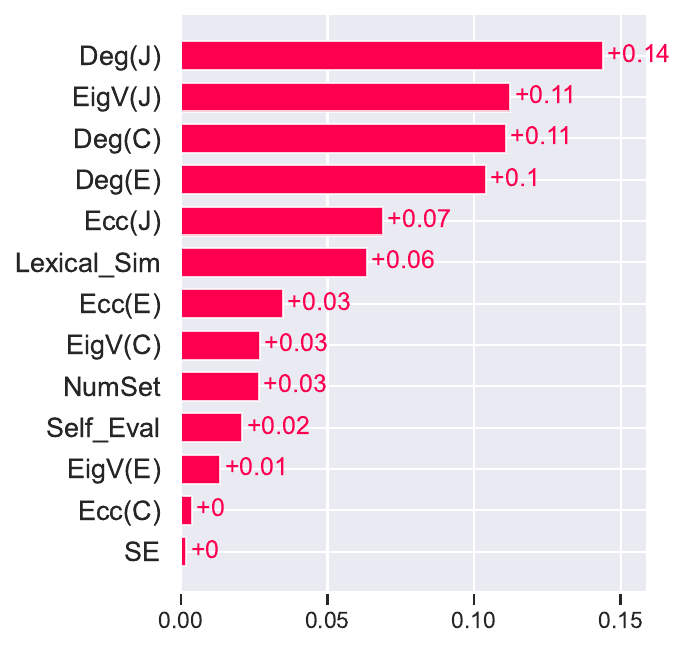}
        \caption{Shapley Values: logistic regression.}   
        \label{fig:plot_shap1}
    \end{subfigure}
    \hfill
    \begin{subfigure}{0.49\textwidth}      
    \includegraphics[height=0.27\textheight]{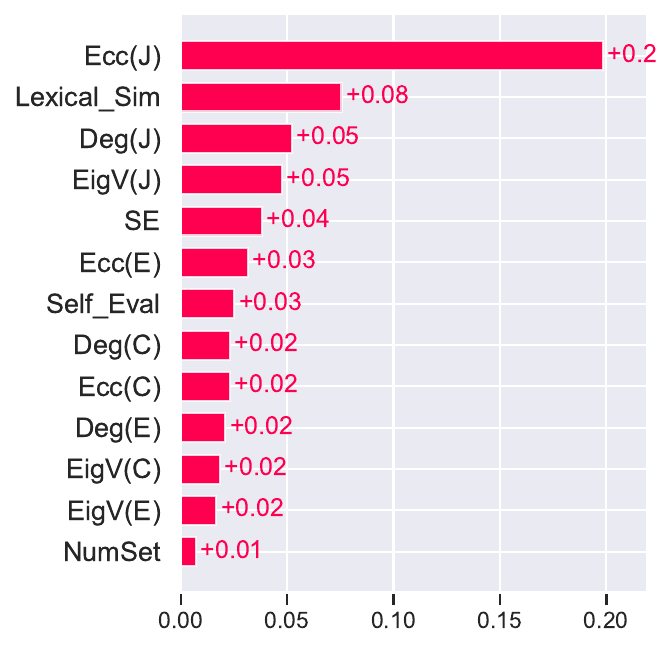}
        \caption{Shapley Values: Random Forests}   
        \label{fig:plot_shap2}
    \end{subfigure}
    \vskip\baselineskip
    \begin{subfigure}{0.49\textwidth}      \includegraphics[height=0.27\textheight]{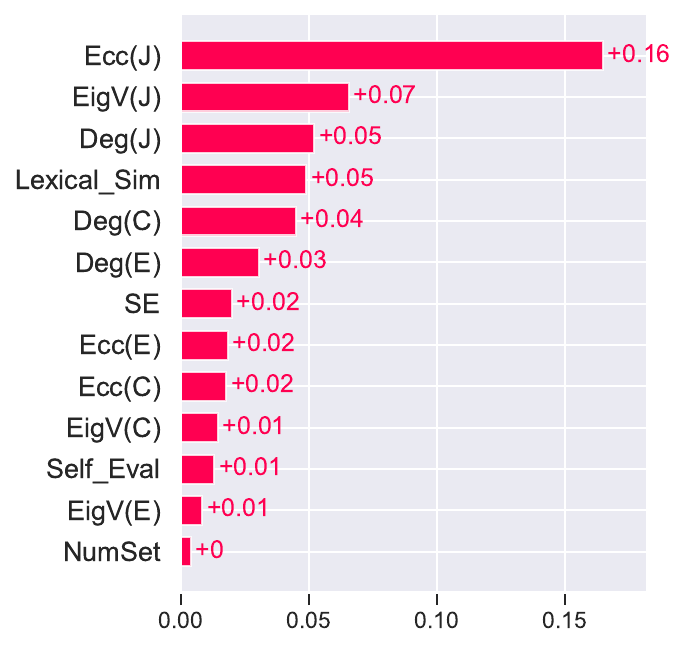}
        \caption{Shapley Values: XGBRF}   
        \label{fig:plot_shap3}
    \end{subfigure}
    \hfill
    \begin{subfigure}{0.49\textwidth}     
    \includegraphics[height=0.27\textheight]{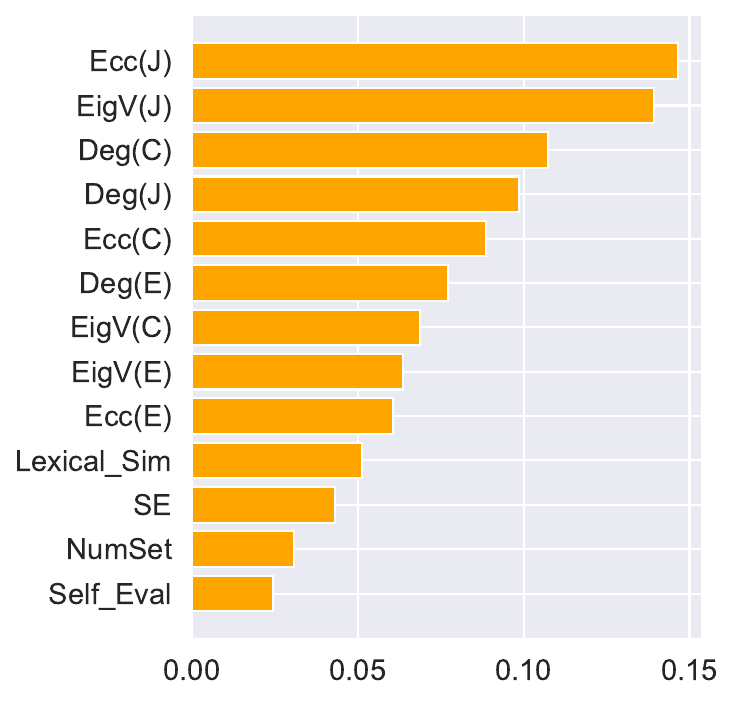}
        \caption{Feature importance for XGBRF extracted from \texttt{sklearn}.}
        \label{fig:plot_xgbrf}
    \end{subfigure}
    \caption{Measures of Feature importance (Dataset: TriviaQA, LLM: LLaMA-2-13B-chat).}
\end{figure*}
 
\subsection{Comments on FDR guarantee and comparison to other frameworks} 
\label{app:subsec_compare}
The major distinction between our method and other applications of conformal prediction on the foundation models is the type of guarantee we pursue. This is related to the following question: how can we use the outputs of these algorithms in practice? In short, (a) For \cite{quach2023conformal}, it is unclear how to pick one answer from the multiple answers in a way that does not hinder validity. (b) Conformal actuality \cite{mohri2024language} makes the original outputs less specific, which can hinder downstream use.

To demonstrate (a), we conducted additional experiments for more clear comparison. With the \texttt{TriviaQA} dataset and the base model \texttt{LLaMA-2-13B-chat}, we follow the procedure in \cite{quach2023conformal} and have the following examples:
\begin{itemize}
\item Example 1: Question: In tennis, losing two sets 6-0 is known as a double what? True answer: Bagel. Generated set: \texttt{\{`bagel’\}}.
\item Example 2: Question: What was the name of the brothel in The Best Little Whorehouse in Texas? True answer: Chicken ranch. Generated set: \texttt{\{`miss monas', `chicken ranch', `miss mona’s', `miss monas bordello'\}}.
\end{itemize}
As we can see in Example 1, the generated set is very informative and can be used directly. However, although the generated set in the second example has a high probability of covering the true answer, it contains very different answers and is potentially confusing in practice. In other words, whether ``Chicken ranch'' or ``Miss Monas'' is correct is unclear to the user.

In addition, as is shown in Figure~\ref{fig:plot_prop} (left panel), we present the proportion that the size of generated size is $k$ for different values of $k \in \NN$, which shows that there are more than $40\%$ of sets having no less than $4$ distinct items. In addition, Figure~\ref{fig:plot_prop} (right panel) shows that in nearly $30\%$ of the questions, the admissible answers (with $\texttt{rougeL} \geq 0.3$) take up less than $20\%$ of the uncertain set, which poses a challenge in the direct deployment of answers in the uncertain set.

\begin{figure}[ht]
\centering
\includegraphics[width=0.9\textwidth]{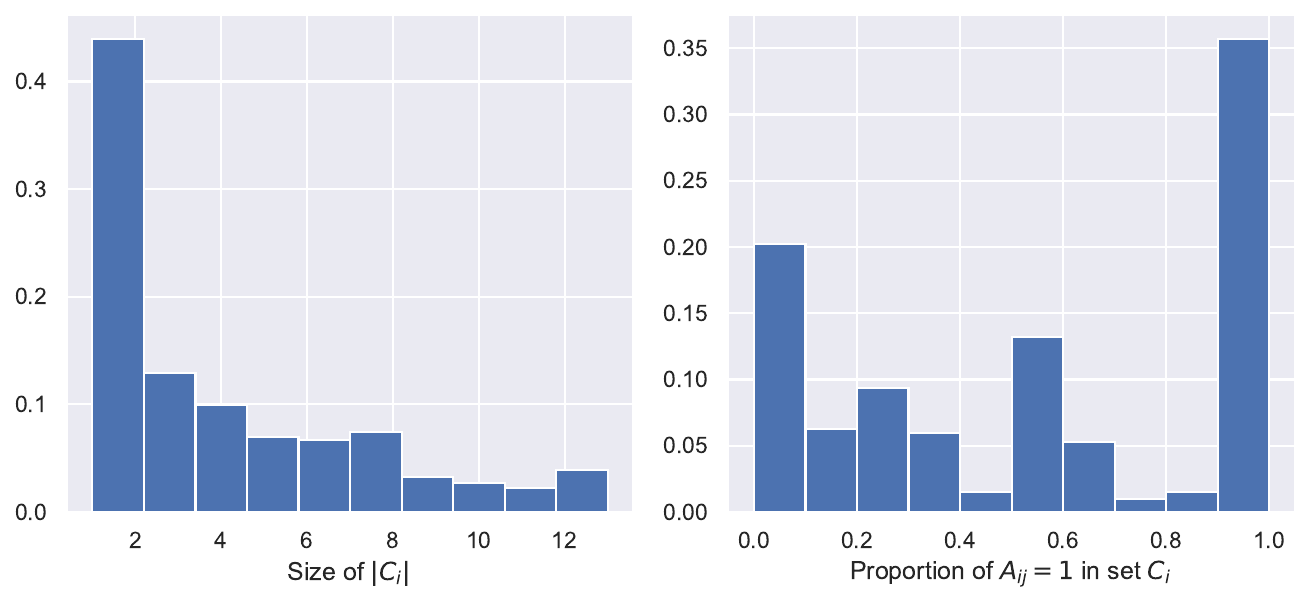}
\caption{Additional histograms for Conformal Language Modeling: sizes of uncertainty sets and proportions of admissible answers, where $C_i$ denotes the uncertainty set for question No.$i$ (Dataset: TriviaQA, LLM: LLaMA-2-13B-chat).}
\label{fig:plot_prop}
\end{figure}

\section{Additional experimental results for CXR report generation}
\label{app:cxr_exp}


\subsection{Additional results with alternative classifiers}

We also use random forests and XGBRF as alternative classifiers to train the alignment score predictor. Similar to Section~\ref{sec:xray}, we fix $\gamma_1=0.2$ and $\gamma_2=0.5$. The results are in
Figure~\ref{fig:fdr-opt-cxr-rf} for random forests, and Figure~\ref{fig:fdr-opt-cxr-xgbrf} for XGBoost. 
As $|\cD|$ increases, we observe a slight increase as well as a decrease in variance in the power curve while FDR remains tightly controlled.

\begin{figure}[!h]
    \centering
    \includegraphics[width=\textwidth]{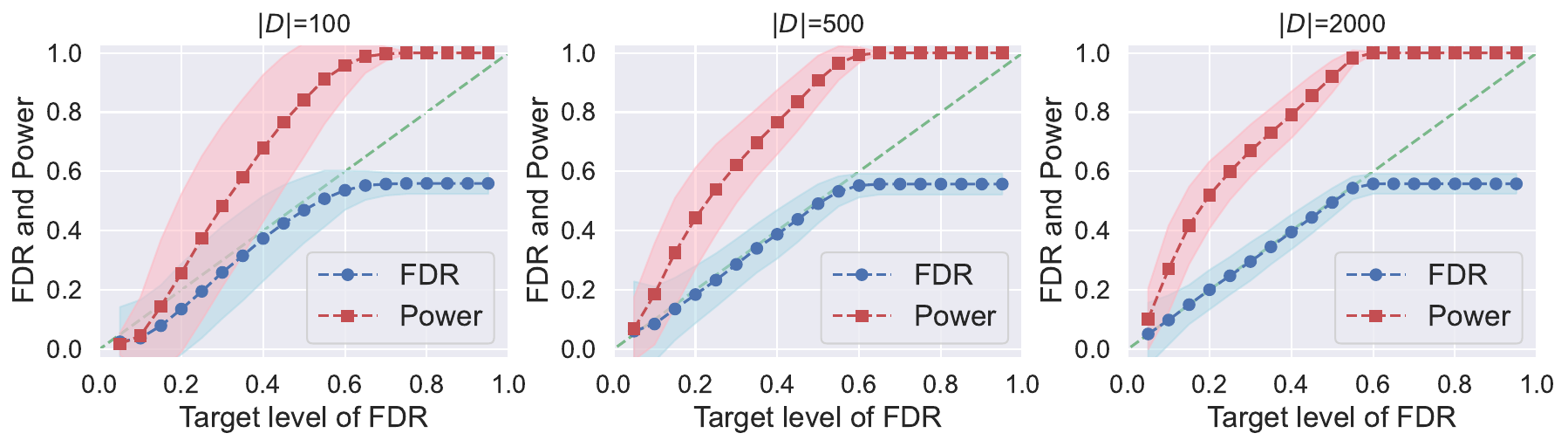}
    \caption{FDR/power plots for conformal alignment with CXR dataset when using random forests to train the alignment predictor. Details are otherwise the same as in Figure~\ref{fig:fdr-cxr}.}
    \label{fig:fdr-opt-cxr-rf}
\end{figure}

\begin{figure}[!h]
    \centering
    \includegraphics[width=\textwidth]{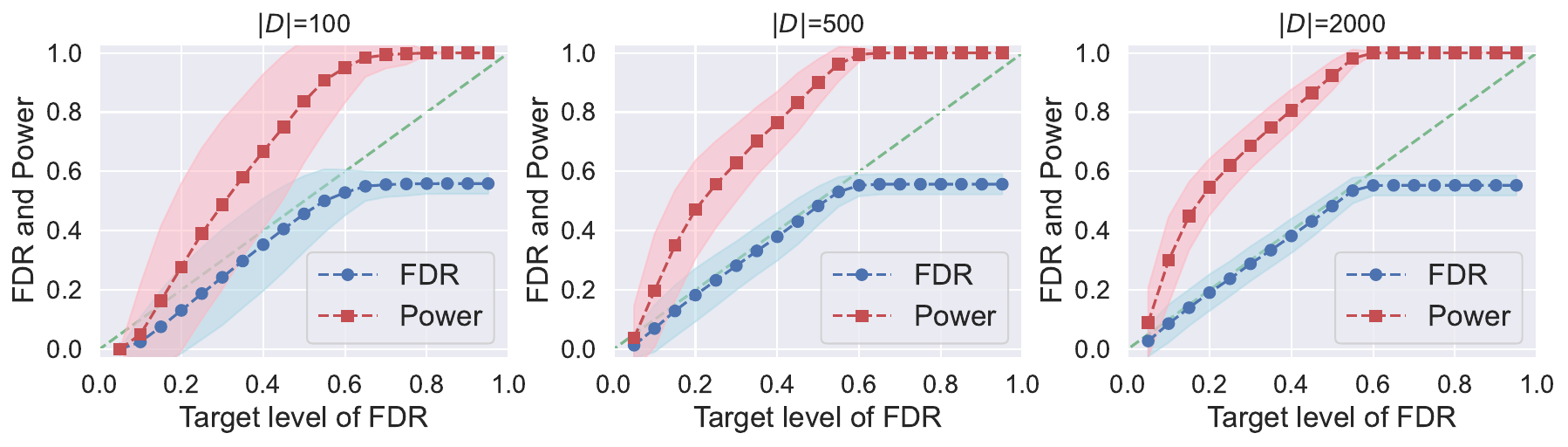}
    \caption{FDR/power plots for conformal alignment with CXR dataset when using XGBRF to train the alignment predictor. Details are otherwise the same as in Figure~\ref{fig:fdr-cxr}.}
    \label{fig:fdr-opt-cxr-xgbrf}
\end{figure}

\subsection{Informativeness of individual features}
\label{app:subsec_cxr_ind}
Figure~\ref{fig:fdr-cxr-roc-fdr} shows the plots of FDR in the same experiment with Figure~\ref{fig:fdr-cxr-roc}. Recall that we use one feature at a time to train the alignment score predictor.
While FDR is controlled for all features, \texttt{Num\_Sets} and \texttt{Ecc(C)} tend to return empty selection sets $\cS$ when $\alpha$ is small, which results in zero FDR as well as zero power. Besides, \texttt{Ecc(E)} is also relatively less powerful, which implies that the context of CXR report can be challenging for the Eccentricity score since the dimension embeddings is much higher. 

\begin{figure}[h]
    \centering    \includegraphics[width=0.75\textwidth]{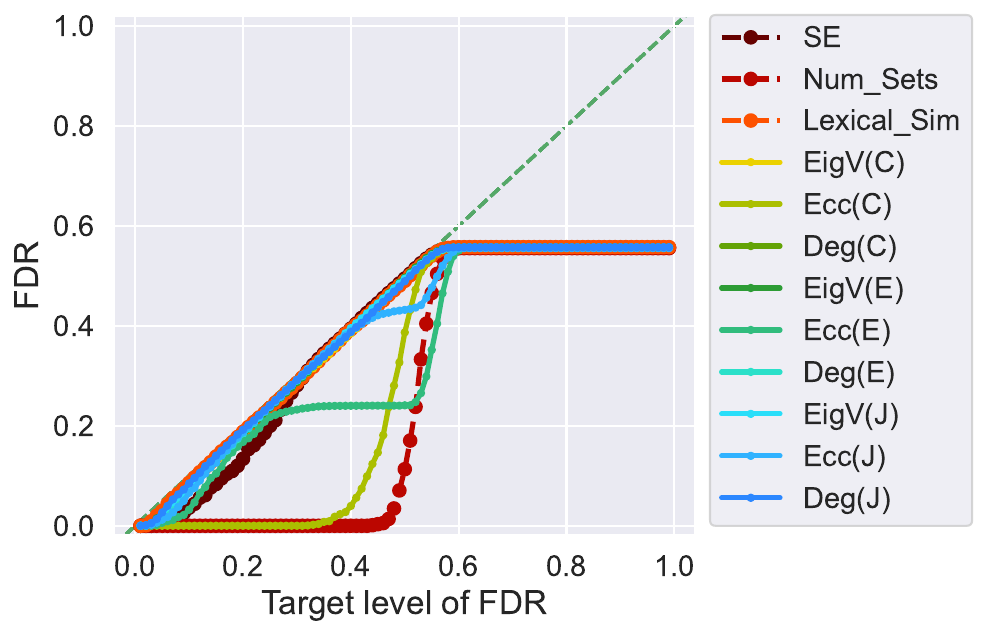}
    \vspace{-1em}
    \caption{FDR at each FDR level for CXR dataset when the alignment predictor is trained with logistic regression over each individual feature, fixing $|\cD|=2000$, $\gamma_1=0.2$, $\gamma_2=0.5$.}
    \label{fig:fdr-cxr-roc-fdr}
\end{figure}

\subsection{Ablation studies}

We present ablation studies for the sample splitting hyperparameters $\gamma_1,\gamma_2$.

\paragraph{Varying $\gamma_1$ (size of the tuning set).}
With $|\cD|=2000$, $\gamma_2 = 0.3$ and logistic regression as the base classifier, we vary $\gamma_1$ in $\{0.2,0.4,0.6\}$ in Figure~\ref{fig:fdr-opt-cxr-gamma1-1000}, in which both FDR and power exhibit larger variance when $\gamma_1$ is close to $1$ due to limited sample size in $\cD_{\tr} \cup \cD_{\calib}$. Thus, we recommend $\gamma_1=0.2$ similar to  the QA tasks.

\begin{figure}[!h]
    \centering
    \includegraphics[width=\textwidth]{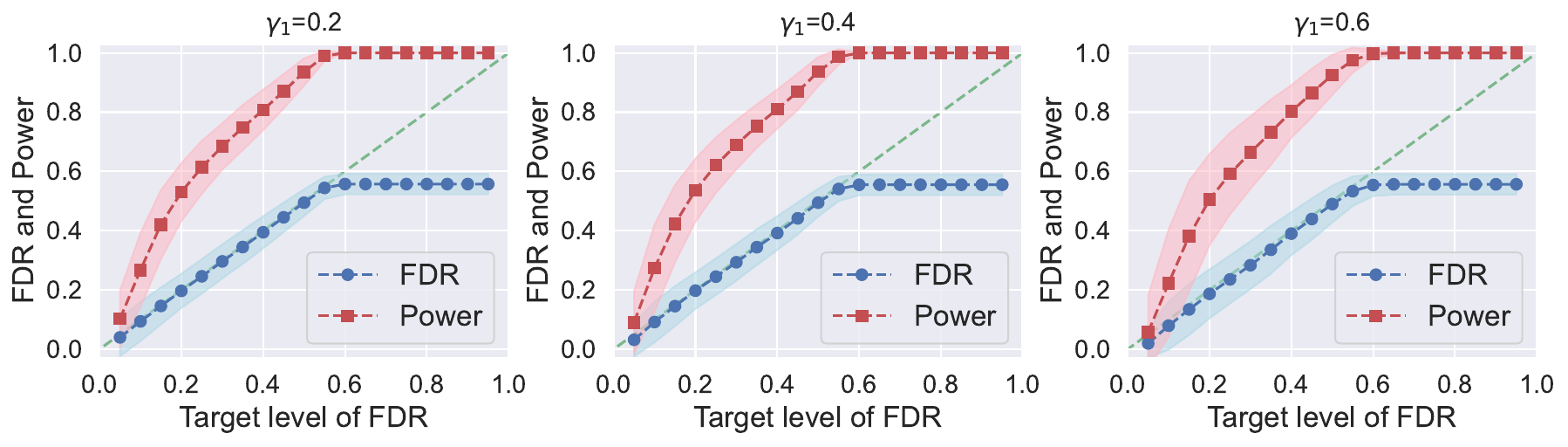}
    \caption{Ablation study for $\gamma_1$: FDR/power plots for conformal alignment with CXR dataset when fixing $|\cD|=2000$, $\gamma_2=0.3$ and using logistic regression to train the alignment predictor.}
    \label{fig:fdr-opt-cxr-gamma1-1000}
\end{figure}

\begin{figure}[ht]
    \centering
    \includegraphics[width=\textwidth]{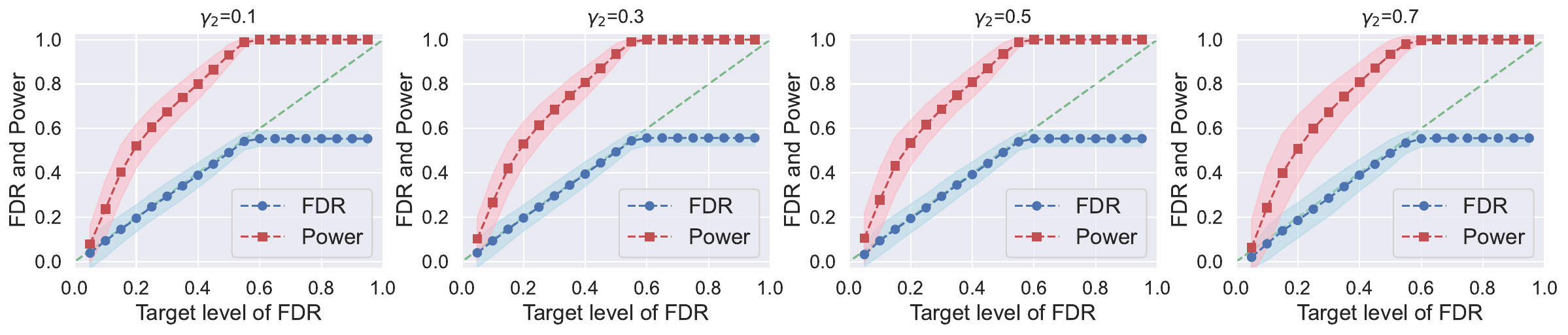}
    \caption{FDR/power plots for conformal alignment with CXR dataset: $|\cD|=2000$, $\gamma_1=0.2$, logistic regression as the base classifier.}
    \label{fig:fdr-opt-cxr-gamma2-1000}
\end{figure}

\paragraph{Varying $\gamma_2$ (size of the predictor training set).}
With $|\cD|=2000$, $\gamma_1=0.2$, and logistic regression as the base classifier, the proportion of the training set varies in $\{0.1,0.3,0.5,0.7\}$.

In Figure~\ref{fig:fdr-opt-cxr-gamma2-1000}, the plots with $\gamma_2=0.1,0.3,0.5$ are comparable, but as $\gamma_2$ is close to $1$, implying that the $\cD_{\calib}$ will produce p-values with a low resolution, the FDR and power are less stable. In this case, it reveals that there are different requirements in sample size for $\cD_{\tr}$ and $\cD_{\calib}$. To better understand the asymmetric roles of the training and calibration sets, we further decrease the size of the reference dataset and choose $|\cD|=100$ to repeat the experiment.

\begin{figure}
    \centering
    \includegraphics[width=\textwidth]{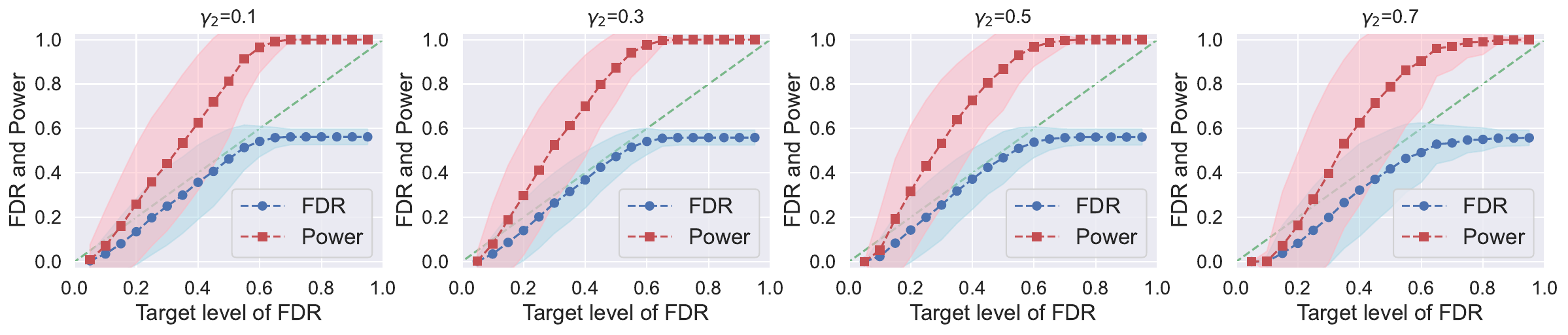}
    \caption{FDR/power plots for conformal alignment with CXR dataset: $|\cD|=100$, $\gamma_1=0.2$, logistic regression as the base classifier.}
    \label{fig:fdr-opt-cxr-gamma2-100}
\end{figure}

Figure~\ref{fig:fdr-opt-cxr-gamma2-100} further shows that too small ($0.1$) or too large ($0.7$) proportion of $\cD_{\tr}$ are both not preferred, thus any value $\gamma_2 \in [0.3,0.5]$ is suggested.

\subsection{Real examples in report generation}
\label{app:subsec_cxr_example}
To illustrate the role of the alignment score predictor, we present examples of CXR images with reference and generated reports in Table~\ref{tab:cxr-eg}. In this realization, following Algorithm~\ref{alg:ca}, the selected set is obtained by $\cS = \{i \in \cD_{\calib}:\;\hat A_i \geq \hat \tau\}$ with $\hat\tau = 0.592$. For the first image, as highlighted in pink, the foundation model misreads the size of the cardiac silhouette. Accordingly, the alignment score predictor returns $\hat A_i = 0.194 < \hat \tau$, and the conformal alignment procedure successfully excludes this generated report. For the second image, the conditions of the cardio-mediastinal silhouette and hilar contours are correctly captured by the foundation model, and a correct assessment of both pleural effusion and pneumothorax is provided. 

\begin{table}[ht]
\centering
\begin{center}
\begin{tabular}{ c p{5.2cm} p{5.2cm}}
\toprule
CXR image & Reference report & Generated report \\ 
\cmidrule(r){1-1}\cmidrule(r){2-2}\cmidrule(r){3-3}
\raisebox{-0.5\totalheight}
{\makecell{\includegraphics[width=0.17\textwidth]{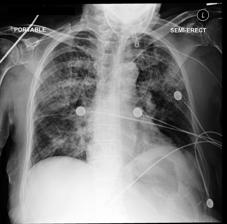}\\\\Alignment scores:\\$A_i = 0$\\$\hat A_i = 0.194 < \hat \tau$}}
& 
\texttt{\small In comparison with the study of \_, there has been a substantial increase in opacifications diffusely involving both lungs. \ctext[RGB]{255,180,200}{Cardiac silhouette remains within normal limits} and there is no evidence of pleural effusion. The appearance suggests diffuse pulmonary edema. However, in the appropriate clinical setting, widespread pneumonia or even ARDS could be considered.}
& 
\texttt{\small In comparison with the study of \_, there is little overall change. Again \ctext[RGB]{255,180,200}{there is a substantial enlargement of the cardiac silhouette} with diffuse bilateral pulmonary opacifications consistent with pulmonary edema.  In the appropriate clinical setting, superimposed pneumonia would have to be considered.}
\\ 
\midrule
\raisebox{-0.42\totalheight}
{\makecell{\includegraphics[width=0.17\textwidth]{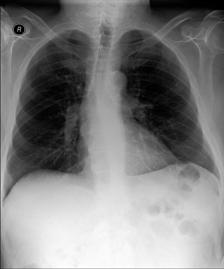}\\\\Alignment scores:\\$A_i = 1$\\$\hat A_i = 0.875 > \hat \tau$}}
&
\texttt{\small The lungs are well inflated and clear. \ctext[RGB]{100,200,240}{The cardiomediastinal silhouette, hilar contours}, and pleural surfaces \ctext[RGB]{100,200,240}{are normal.} There is \ctext[RGB]{100,200,240}{no pleural effusion or pneumothorax}. Mild degenerative changes are seen throughout the thoracic spine. IMPRESSION: No acute cardiopulmonary process.
}
&
\texttt{\small \ctext[RGB]{100,200,240}{The cardiomediastinal and hilar contours are within normal limits.} The lungs are clear \ctext[RGB]{100,200,240}{without} focal consolidation, \ctext[RGB]{100,200,240}{pleural effusion or pneumothorax}. IMPRESSION: No acute cardiopulmonary process.
}
\\
\bottomrule
\end{tabular}
\caption{Illustration of conformal alignment with CXR examples: selection threshold $\hat\tau=0.592$.}
\label{tab:cxr-eg}
\end{center}
\end{table}

\end{document}